% updated April 2002 by Antje Endemann
% Based on CVPR 07 and LNCS, with modifications by DAF, AZ and elle, 2008 and AA, 2010, and CC, 2011; TT, 2014; AAS, 2016; AAS, 2020; TH, 2022

\documentclass[runningheads]{llncs}
\usepackage{graphicx}
% DO NOT USE \usepackage{times}, it will be removed by typesetters
%\usepackage{times}

\usepackage{tikz}
\usepackage{comment}
\usepackage{color}
\usepackage{algorithm}
\usepackage{algorithmicx}
\usepackage{algpseudocode}
\usepackage{booktabs}
\usepackage{multicol}
\usepackage{multirow}
\usepackage{xfrac}
\usepackage{breakcites}
\usepackage{subcaption}
%%%%% NEW MATH DEFINITIONS %%%%%

\usepackage{amsmath,amsfonts,bm}

% Mark sections of captions for referring to divisions of figures

% Highlight a newly defined term

% Figure reference, lower-case.

% Figure reference, capital. For start of sentence

% Section reference, lower-case.

% Section reference, capital.

% Reference to two sections.

% Reference to three sections.

% Reference to an equation, lower-case.
\def\eqref#1{equation~\ref{#1}}
% Reference to an equation, upper case

% A raw reference to an equation---avoid using if possible

% Reference to a chapter, lower-case.

% Reference to an equation, upper case.

% Reference to a range of chapters

% Reference to an algorithm, lower-case.

% Reference to an algorithm, upper case.

% Reference to a part, lower case

% Reference to a part, upper case

\def\1{\bm{1}}

% Random variables

% rm is already a command, just don't name any random variables m

% Random vectors

% Elements of random vectors

% Random matrices

% Elements of random matrices

% Vectors

% Elements of vectors

% Matrix

\def\mT{{\bm{T}}}

% Tensor
\DeclareMathAlphabet{\mathsfit}{\encodingdefault}{\sfdefault}{m}{sl}
\SetMathAlphabet{\mathsfit}{bold}{\encodingdefault}{\sfdefault}{bx}{n}

% Graph

% Sets

% Don't use a set called E, because this would be the same as our symbol
% for expectation.

% Entries of a matrix

\def\emT{{T}}

% entries of a tensor
% Same font as tensor, without \bm wrapper

% The true underlying data generating distribution

% The empirical distribution defined by the training set

% The model distribution

% Stochastic autoencoder distributions

 % Laplace distribution

% Wolfram Mathworld says $L^2$ is for function spaces and $\ell^2$ is for vectors
% But then they seem to use $L^2$ for vectors throughout the site, and so does
% wikipedia.

 % See usage in notation.tex. Chosen to match Daphne's book.

\usepackage{amssymb,amsthm,mathtools}

% The "axessiblity" package can be found at: https://ctan.org/pkg/axessibility?lang=en
\usepackage[accsupp]{axessibility}  % Improves PDF readability for those with disabilities.
\usepackage[pagebackref,breaklinks,colorlinks]{hyperref}

% INITIAL SUBMISSION - The following two lines are NOT commented
% CAMERA READY - Comment OUT the following two lines
% \usepackage{ruler}
% \usepackage[width=122mm,left=12mm,paperwidth=146mm,height=193mm,top=12mm,paperheight=217mm]{geometry}

\newcommand{\STAB}[1]{\begin{tabular}{@{}c@{}}#1\end{tabular}}

\makeatletter
\def\@fnsymbol#1{\ensuremath{\ifcase#1\or \dagger\or *\or \ddagger\or
   \mathsection\or \mathparagraph\or \|\or **\or \dagger\dagger
   \or \ddagger\ddagger \else\@ctrerr\fi}}

\begin{document}
% \renewcommand\thelinenumber{\color[rgb]{0.2,0.5,0.8}\normalfont\sffamily\scriptsize\arabic{linenumber}\color[rgb]{0,0,0}}
% \renewcommand\makeLineNumber {\hss\thelinenumber\ \hspace{6mm} \rlap{\hskip\textwidth\ \hspace{6.5mm}\thelinenumber}}
% \linenumbers
\pagestyle{headings}
\mainmatter
\def\ECCVSubNumber{8140}  % Insert your submission number here

\title{Learning with Noisy Labels by \\Efficient Transition Matrix Estimation to Combat Label Miscorrection} % Replace with your title

% INITIAL SUBMISSION 
%\begin{comment}
% \titlerunning{ECCV-22 submission ID \ECCVSubNumber} 
% \authorrunning{ECCV-22 submission ID \ECCVSubNumber} 
% \author{Seong Min Kye\thanks{Equal contribution.}, Kwanghee Choi\textsuperscript{$\dagger$}, Joonyoung Yi\textsuperscript{$\dagger$}, Buru Chang\thanks{Corresponding author.}}
% \institute{Hyperconnect}

%\end{comment}
%******************

% CAMERA READY SUBMISSION
% \begin{comment}
\titlerunning{Learning with Noisy Labels by Efficient Transition Matrix Estimation}
% If the paper title is too long for the running head, you can set
% an abbreviated paper title here
%
\author{Seong Min Kye\thanks{Equal contribution.}\index{Kye, Seong Min} \and
Kwanghee Choi\textsuperscript{$\dagger$} \and
Joonyoung Yi\textsuperscript{$\dagger$} \and
Buru Chang\thanks{Corresponding author.}}
\authorrunning{S. Kye et al.}
% First names are abbreviated in the running head.
% If there are more than two authors, 'et al.' is used.
%
\institute{Hyperconnect Inc.\\
\email{\{harris,kwanghee.choi,joonyoung.yi,buru.chang\}@hpcnt.com}}
% \end{comment}
%******************
\maketitle

\begin{abstract}\label{sec:0_abstract}
Recent studies on learning with noisy labels have shown remarkable performance by exploiting a small clean dataset.
In particular, model agnostic meta-learning-based label correction methods further improve performance by correcting noisy labels on the fly.
However, there is no safeguard on the label miscorrection, resulting in unavoidable performance degradation.
Moreover, every training step requires at least three back-propagations, significantly slowing down the training speed.
To mitigate these issues, we propose a robust and efficient method, \textit{FasTEN}, which learns a label transition matrix on the fly.
Employing the transition matrix makes the classifier skeptical about all the corrected samples, which alleviates the miscorrection issue.
We also introduce a two-head architecture to efficiently estimate the label transition matrix every iteration within a single back-propagation, so that the estimated matrix closely follows the shifting noise distribution induced by label correction.
Extensive experiments demonstrate that our FasTEN shows the best performance in training efficiency while having comparable or better accuracy than existing methods, especially achieving state-of-the-art performance in a real-world noisy dataset, Clothing1M.

\keywords{Learning with noisy labels; Label correction; Transition matrix estimation}
\end{abstract}

\section{Introduction}\label{sec:1_introduction}
In the last decade, supervised learning has achieved great success by leveraging an abundant amount of annotated data to solve various classification tasks such as image classification~\cite{he2016deep}, object detection~\cite{girshick2014rich}, and face recognition~\cite{taigman2014deepface}.
It has been proven both theoretically and empirically that the performance of supervised learning-based classification models steadily improves as the size of annotated data increases~\cite{goodfellow2016deep,charikar2017learning,floridi2020gpt}.
However, we cannot avoid \textit{noisy labels} due to its coarse-grained annotation sources~\cite{hendrycks2018using_glc,zheng2021meta}, resulting in performance degradation~\cite{chen2019understanding}.

\begin{figure}[t] %%% t: top, b: bottom, h: here
\centering
\subfloat{\includegraphics[width=0.255\linewidth]{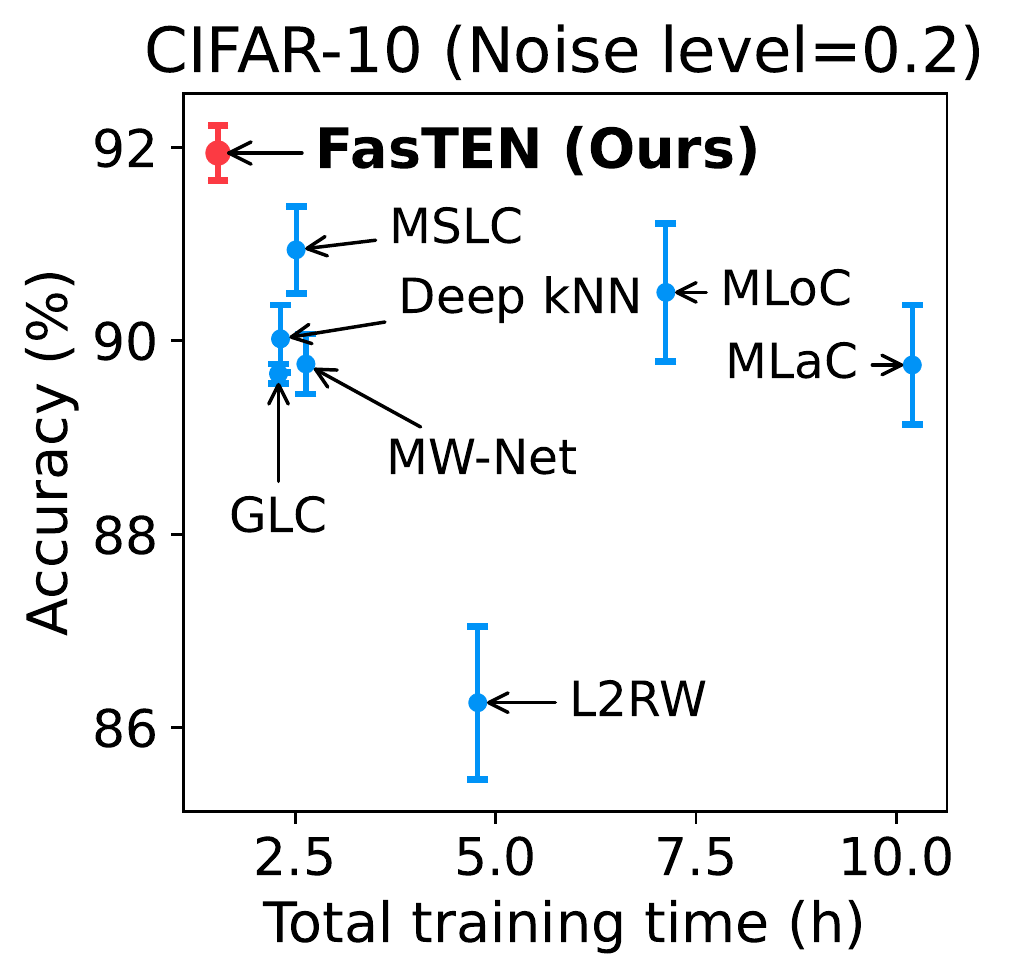}}
\subfloat{\includegraphics[width=0.245\linewidth]{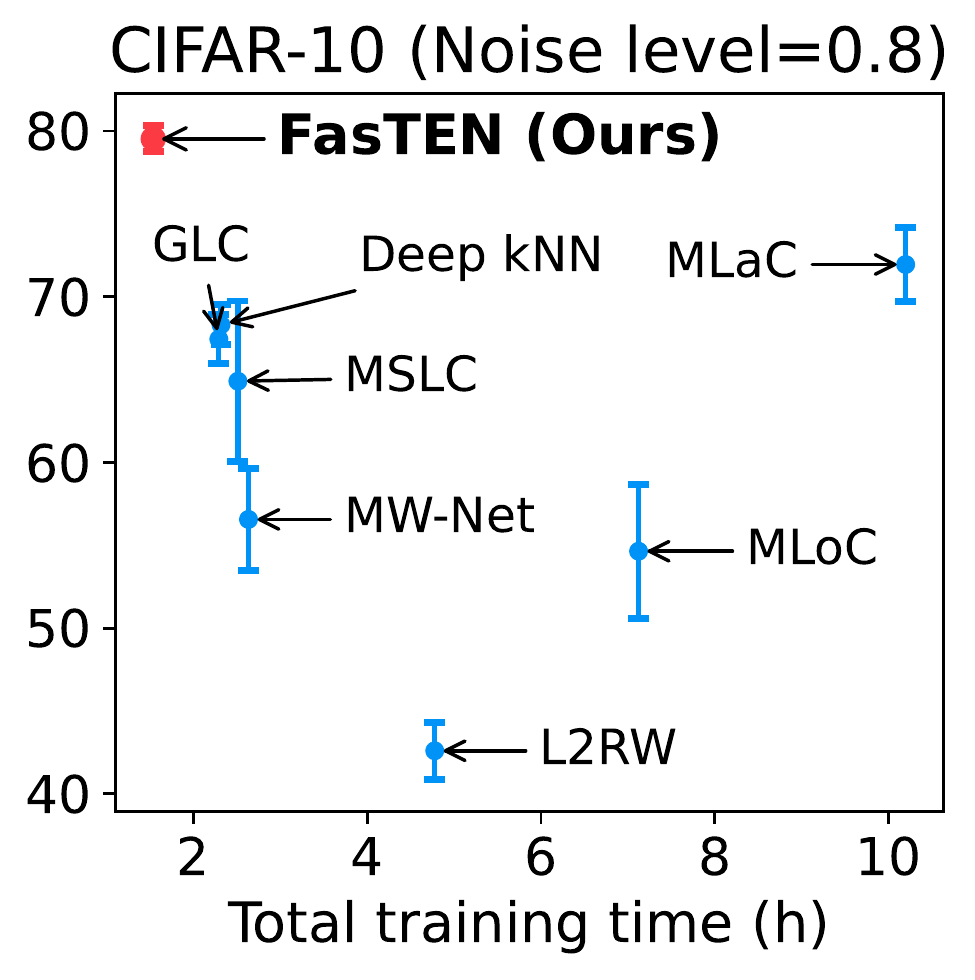}}
\subfloat{\includegraphics[width=0.25\linewidth]{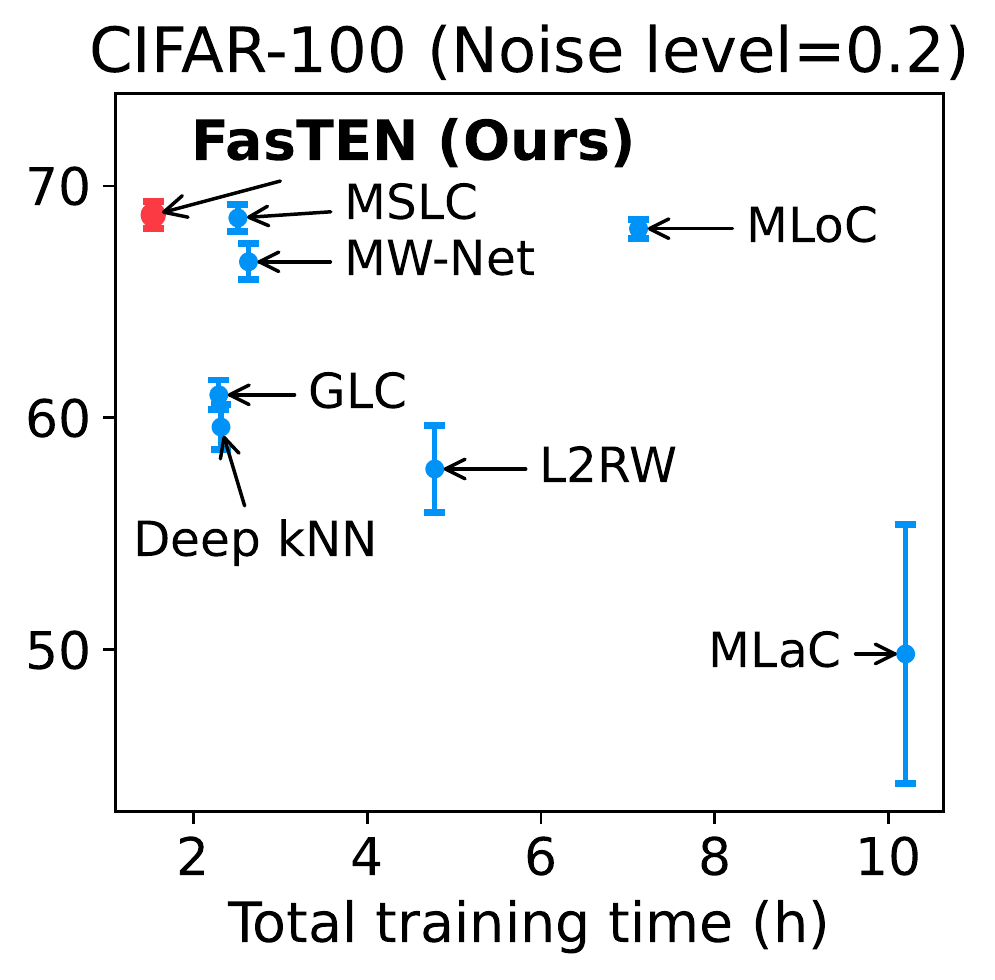}}
\subfloat{\includegraphics[width=0.25\linewidth]{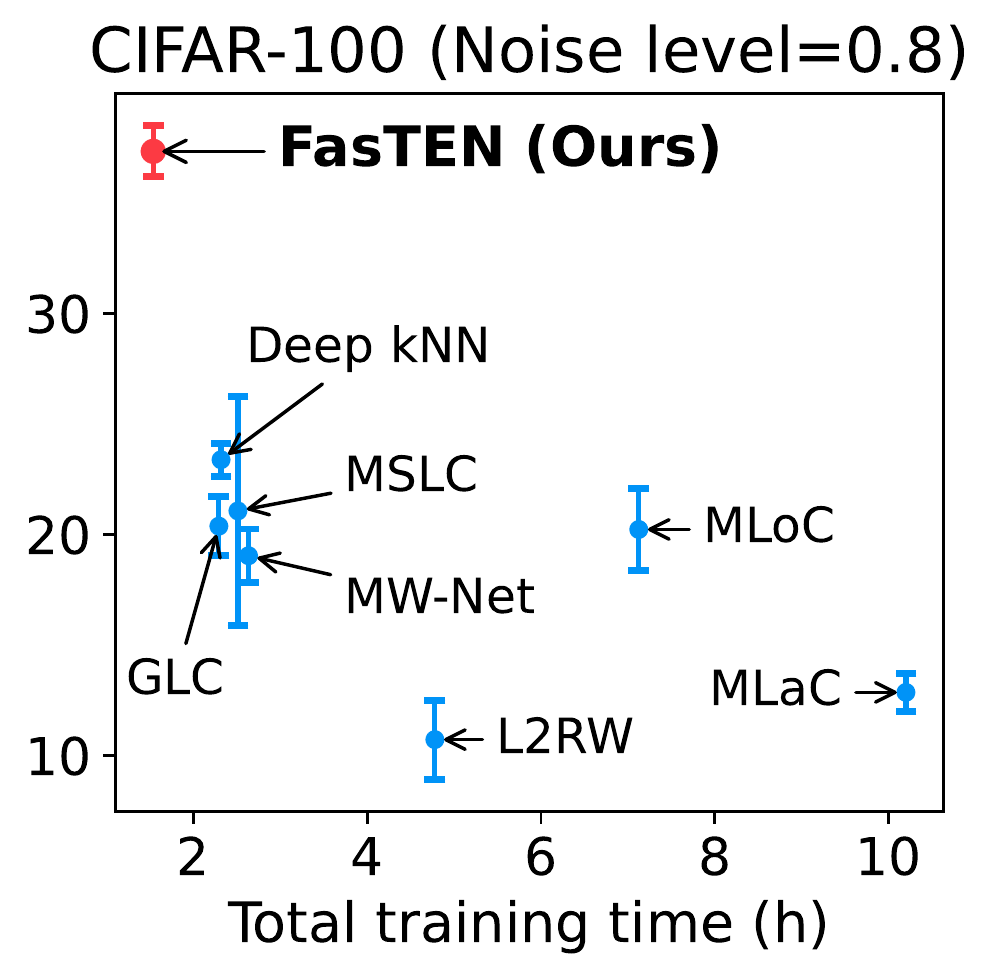}}
% \vspace{-1em}
\caption{
Plotting accuracy (\%) \textbf{(y-axis)} according to total training time (hours) \textbf{(x-axis)}.
Our proposed method (FasTEN) shows the best performance in training efficiency while having comparable or better accuracy on both CIFAR-10/100 with various noise levels.
}
% \vspace*{-2em}
\label{fig:1_efficiency}
\end{figure}

Many methods have been proposed to build a classifier that is robust to noisy labels.
Unlike traditional methods~\cite{mnih2012learning,van2015learning,azadi2015auxiliary,patrini2016loss} which assume that all the given labels are potentially corrupted, recently proposed methods utilize an inexpensively obtained small clean dataset to improve performance further.
Based on the clean data set, loss correction methods~\cite{hendrycks2018using_glc,wang2020training} reduce the influence of noisy labels by modifying loss functions and re-weighting methods~\cite{ren2018learning,shu2019meta,bahri2020deep,ghosh2021we} penalize samples that are likely to be noisy labels.
Especially, recent label correction methods~\cite{wu2020learning,zheng2021meta} achieve remarkable performance based on model-agnostic meta-learning (MAML)~\cite{finn2017model}.
These methods relabel noisy labels to directly reduce the noise level, raising the theoretical upper bound of the predictive performance (See Appendix~\ref{appendix:theory_motivation}).

However, there are two challenges for these MAML-based label correction methods:
(1) \textit{The label correction methods blindly trust the already miscorrected labels}.
Erroneously corrected labels are often kept throughout the training, which causes the model to learn the miscorrected labels as ground-truth labels.
Several studies~\cite{mirzasoleiman2020coresets,wu2020learning} attempt to tackle this through training techniques such as soft labels, whereas it does not fundamentally solve the problem.
(2) \textit{MAML-based methods are inherently slow in training, resulting in excessive computational overhead}.
The inefficiency comes from multiple training steps per single iteration of MAML-based methods, including virtual updates with inner optimization loops.

To alleviate these issues, we propose a robust and efficient method called \textbf{\textit{FasTEN}} (\textbf{Fas}t \textbf{T}ransition Matrix \textbf{E}stimation for Learning with \textbf{N}oisy Labels).
FasTEN efficiently estimates a transition matrix to learn with noisy labels while continuously correcting them on-the-fly.
It is theoretically proven that the correctly estimated label transition matrix is useful to obtain a statistically consistent classifier from noisy labels~\cite{xia2019anchor,yao2020dual} (See Appendix~\ref{appendix:theory_background}), i.e., more robust to noisy labels.
% \textcolor{red}{We focus on the class-dependent and instance-independent case, following other recent prominent methods~\cite{han2018co,liu2015classification,tanaka2018joint,xia2019anchor,yao2020dual,li2021provably}, where it is proven that \textit{class-dependent} and \textit{instance-independent} transition matrix (i.e. $p ( \Bar{y} | y, x ) = p ( \Bar{y} | y)$) can be successfully estimated given the certain anchor points like the small clean set~\cite{liu2015classification,scott2015rate,xia2019anchor,yao2020dual}.}
To efficiently estimate the transition matrix, we adopt a two-head architecture that consists of two classifiers, a noisy and a clean classifier, with a shared feature extractor.
For every iteration, the noisy classifier estimates the label transition matrix shifted by the label correction.
On the other hand, the clean classifier is trained to be statistically consistent by leveraging the estimated transition matrix.
Using the output of the clean classifier, FasTEN relabels noisy labels to reduce the noise level.
Our proposed FasTEN has a safeguard for the miscorrected labels since it adaptively estimates the transition matrix on every iteration, so that the clean classifier stays equally skeptical towards all the corrected labels.
Furthermore, our efficient method jointly optimizes the two-head architecture with only a single back-propagation for each iteration, boosting training speed.
In this paper, we focus on solving the problem of \textit{class-dependent} noisy labels~\cite{goldberger2016training,patrini2017making,zhang2021learning} (i.e., $p ( \Bar{y} | y, x ) = p ( \Bar{y} | y)$), although the problem of instance-dependent noisy labels~\cite{xia2020part,cheng2021learning,zhu2021second} remains an important problem to be addressed.

Experimental results show that our method achieves state-of-the-art performance by a large margin on both the synthetic and real-world noisy label datasets, various noise levels of \textit{CIFAR}~\cite{krizhevsky2009learning} and \textit{Clothing1M}~\cite{xiao2015learning}, respectively.
We demonstrate the exceptional training speed of our proposed FasTEN while achieving better performance compared to baselines, as shown in Figure~\ref{fig:1_efficiency}.
Especially, although our FasTEN assumes only class-dependent noisy labels, it also achieves state-of-the-art performance in the Clothing 1M dataset which contains instance-dependent noisy labels.
This experimental result supports recent observations that leveraging the accurately estimated transition matrix with small clean data is helpful for alleviating instance-dependent noise~\cite{menon2016learning,hendrycks2018using_glc,zhu2021second,jiang2022an} (See Appendix~\ref{appendix:theory_background}).
Finally, we conduct a thorough analysis to understand the inner mechanisms of our proposed method.

Our contribution in this paper is threefold:
(1) We propose a robust and efficient method that learns a transition matrix to learn with noisy labels while continuously correcting them on the fly.
To the best of our knowledge, this is the first attempt to improve the label correction with the transition matrix estimation.
(2) Our proposed method boosts training speed by employing a two-head architecture so that the label transition matrix can be learned with a single back-propagation.
(3) Extensive experiments validate the efficacy of our proposed method in terms of both training speed and predictive performance.

\section{Related Work}\label{sec:2_related_work}
% \noindent\textbf{Learning with Noisy Labels only with Noisy Dataset.}
Learning with noisy labels assumes that labels in all the training samples are potentially corrupted. 
They can be further categorized as follows: 
\textit{various loss functions}~\cite{natarajan2013learning,van2015learning,patrini2016loss,ghosh2017robust,zhang2018generalized,wang2019symmetric,van2015learning,lukasik2020does,ma2020normalized,liu2020peer,lienen2021label,yao2021instance,jiang2022an}, 
\textit{regularizations}~\cite{azadi2015auxiliary,jindal2016learning,hendrycks2019using_pre,hu2019simple,menon2020can,lukasik2020does,han2020training,song2019does,liu2020early,lienen2021label,cao2020heteroskedastic,hendrycks2019using_self,nishi2021augmentation,ortego2020multi,li2020dividemix,zhang2017mixup,harutyunyan2020improving,li2020gradient,ma2018dimensionality}, 
\textit{re-weighting training samples}~\cite{ren2018learning,jiang2018mentornet,mirzasoleiman2020coresets,liu2015classification,wang2017robust,thulasidasan2019combating,chen2019understanding,huang2020self,wang2020less,wu2020topological,wu2020topological,pleiss2020identifying}, and \textit{correcting noisy labels}~\cite{tanaka2018joint,yi2019probabilistic,han2019deep,song2019selfie,zheng2020error,guo2020ltf,kim2021fine,xia2021sample,zhang2021learning}.
However, different losses or regularizations yield inferior performance to state-of-the-art methods~\cite{zhang2017mixup,mirzasoleiman2020coresets,hu2019simple,li2020gradient}, and re-weighting methods often filter out noisy but helpful samples for extracting features to show sub-optimal performance~\cite{song2019selfie,wu2020learning,zheng2021meta,mirzasoleiman2020coresets,chang2017active,lin2017focal,shrivastava2016training}.
\textit{Label correction} methods circumvent their shortcomings by relabeling so that the feature extractor leverage the corrected labels.
However, label correction methods also have a limitation in that they are prone to propagate the error when miscorrected labels are continuously accumulated~\cite{mirzasoleiman2020coresets,wu2020learning,zheng2021meta}.
Others \textit{correct the training loss} by estimating a label transition matrix~\cite{mnih2012learning,reed2014training,sukhbaatar2014training,bekker2016training,patrini2017making,goldberger2016training,yao2019safeguarded,xia2019anchor,yao2020dual} to build a statistically consistent classifier, where the methods need multiple training stages; e.g., include a separate pretraining stage.
In this paper, we join a simple label correction method with estimating the label transition matrix to alleviate the miscorrection issue caused by miscorrected noisy labels, which only requires a single training stage.

\noindent \textbf{Learning with Noisy Label via Small Clean Dataset.}
Unlike traditional methods that use noisy datasets only~\cite{patrini2017making,yao2020dual,li2021provably}, several recent studies argue that a small clean dataset is easily obtained by techniques such as image retrieval \cite{radford2021learning}; hence one can further devise a method that effectively leverages it.
Many studies have successfully adapted the idea and shown massive performance improvement compared to the traditional methods.
Early methods~\cite{hendrycks2018using_glc,bahri2020deep,zhang2020self} require multiple training stages where it hinders the training efficiency.
Recent studies widely adopt MAML~\cite{finn2017model} to various strategies discussed above: sample re-weighting~\cite{veit2017learning,lee2018cleannet,jiang2018mentornet,ren2018learning,li2019learning,shu2019meta}, label correction~\cite{wu2020learning,zheng2021meta}, and label transition matrix estimation~\cite{wang2020training}.
These approaches first perform a virtual update with the noisy dataset, find optimal parameters using the clean dataset, and update the actual parameters by the found parameters.
This virtual update process requires three back-propagations per iteration, leading to at least three times the computational cost.
Our proposed label correction method estimates the label transition matrix using a batch drawn from the clean small dataset in a single back-propagation, greatly enhancing the training speed while showing comparable or better performance to existing state-of-the-art methods.
Additional related works are described in Appendix~\ref{appendix:additional_related_works}.
\section{Methodology}\label{sec:3_methodology}

Existing label correction methods try to find and fix noisy labels to utilize them as clean samples in model training, where they can improve the classification performance by reducing the noise level of the whole training samples.
However, erroneously corrected samples, i.e., clean samples deemed noisy, or vice versa, are often kept throughout the model training.
Since current label correction methods blindly trust these miscorrected labels, this behavior degrades the classification performance under the noisy label situation (\S~\ref{subsec:4_4_robustness_to_miscorrected_labels}).

In this section, we show that the accurately estimated label transition matrix with the clean dataset alleviates the miscorrection problem of existing label correction methods. 
Further, we describe our efficient method estimating the label transition matrix for every training iteration while correcting noise labels. Our proposed method is illustrated in Figure~\ref{fig:2_model_architecture} and summarized in Algorithm~\ref{alg:1_lt2l}.

\begin{figure}[t] %%% t: top, b: bottom, h: here
\centering
\includegraphics[width=\columnwidth]{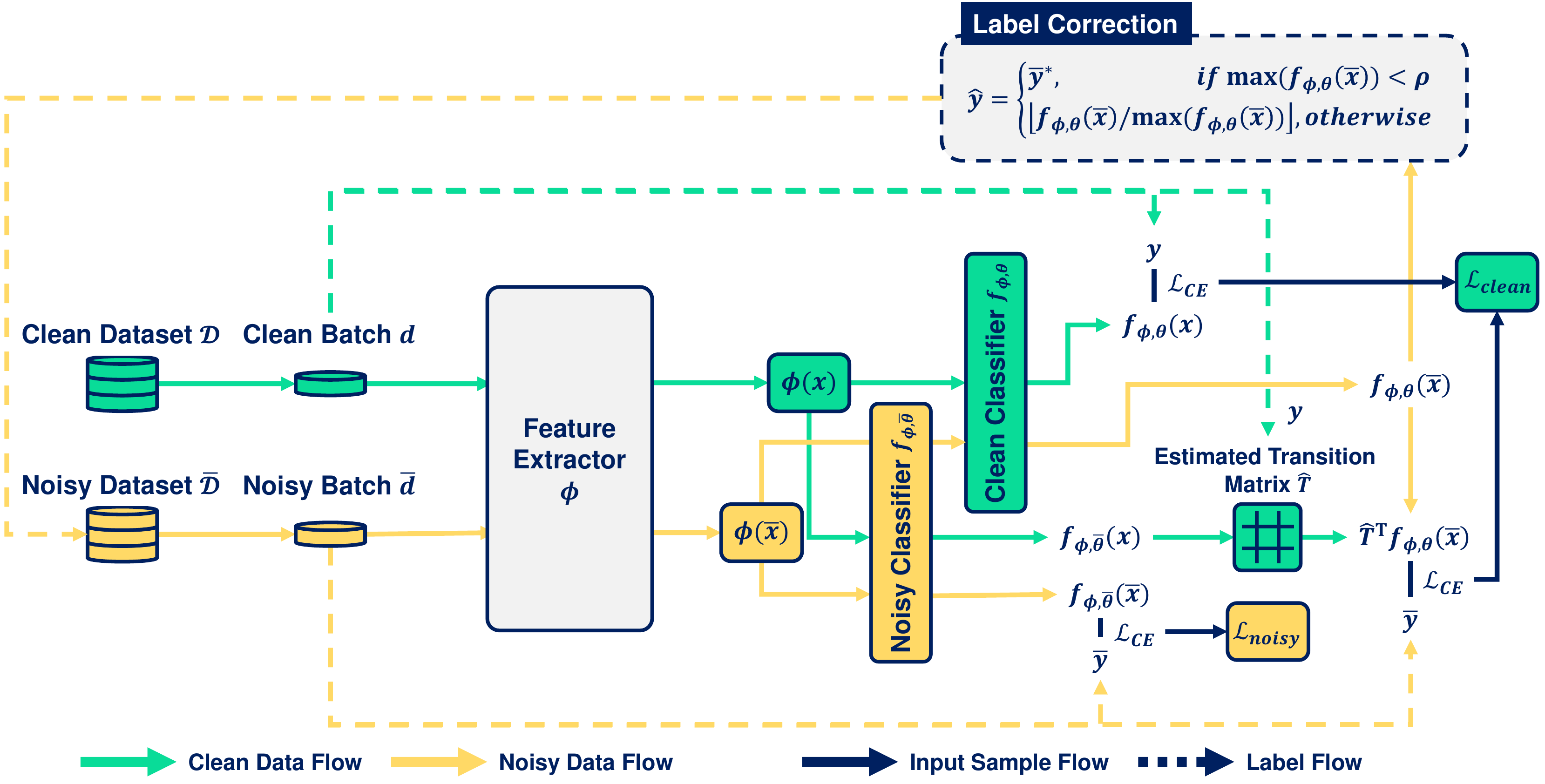}
\caption{
Summarization of our proposed method (FasTEN).
}
\label{fig:2_model_architecture}
% \vspace*{-1em}
\end{figure}

\subsection{Batch Formation}\label{subsec:3_1_episodic_batch_formation}
We estimate the transition matrix to track the shifted noisy label distribution caused by label correction using a clean batch.
To ensure the effective estimation of the label transition matrix, we formulate the batch to have the same number of samples per class.
We first compose the clean batch $d$ with randomly chosen $K$ samples for the entire $N$ classes in the clean dataset $\mathcal{D}$ to get benefits from having a certain amount of clean samples for each class, as follows $d = \{(x_n,y_n)\}_{n=1}^{KN}$ where $x$ is an input and $y \in \mathbb{R}^N$ is the clean label of $x$.
A noisy batch $\Bar{d}$ is randomly sampled from the noisy dataset $\Bar{\mathcal{D}}$, as follows $\Bar{d} = \{(\Bar{x}_n,\Bar{y}_n)\}_{n=1}^{M}$ where $\Bar{y} \in \mathbb{R}^N$ is the noisy label of $\Bar{x}$ and $M$ is the size of the noisy batch which we set as $M=KN$ for simplicity.
It is different from other methods~\cite{zheng2021meta,wu2020learning,shu2019meta,ren2018learning} based on meta-learning which randomly compose the clean batch.

\subsection{Transition Matrix Estimation}\label{subsec:3_2_transition_matrix_estimation}
% \begin{wrapfigure}{R}{1.0\textwidth}
% \begin{minipage}{1.0\textwidth}
%   \vspace{-0.25in}
\begin{algorithm}[t]

\caption{\small\textbf{Fas}t \textbf{T}ransition Matrix \textbf{E}stimation for Learning with \textbf{N}oisy Labels (\textbf{FasTEN})}
\label{alg:1_lt2l}{ % \footnotesize
\begin{algorithmic}
\scriptsize
    \State {\bfseries Input:} Clean dataset $\mathcal{D}$, noisy dataset $\Bar{\mathcal{D}}$.
    \State {\bfseries Hyper-parameters:} Label correction threshold $\rho$, Controllable loss ratio for noisy classifier $\lambda$.
    \State {\bfseries Output:} Clean classifier $f_{\phi, \theta}$ where linear classifier $\theta$ and feature extractor $\phi$.
    \State{Randomly initialize common feature extractor $\phi$, linear classifiers $\theta$, and $\Bar{\theta}$ for clean labels and noisy labels.}
    \For{each epoch $i=0, \cdots$}
        \For{each iteration in epoch $i$}
            \State{Sample mini-batch $d \sim \mathcal{D}$, $\Bar{d} \sim \Bar{\mathcal{D}}$.}
            \State{$\Hat{T} \gets \left( \sum\limits_{(x,y)\in d} y f_{\phi, \Bar{\theta}} (x)^\top \right) \text{diag}^{-1} \left( \sum\limits_{(x, y) \in d} y \right) $}
            \State{$\mathcal{L}_\text{clean} \gets \sum\limits_{(x,y)\in d}  \mathcal{L}_\text{CE} \left(f_{\phi,\theta} (x), y \right) + \sum\limits_{(\Bar{x}, \Bar{y}) \in \Bar{d}}  \mathcal{L}_\text{CE} \left( \Hat{T}^\top f_{\phi,\theta} (\Bar{x}), \Bar{y} \right)$}
            \State{$\mathcal{L}_\text{noisy} \gets \sum_{(\Bar{x}, \Bar{y}) \in \Bar{d}} \mathcal{L}_\text{CE} \left(f_{{\phi,\Bar{\theta}}} (\Bar{x}), \Bar{y} \right) )$}
            \State{
            $\Bar{\mathcal{D}} \gets \left( \Bar{\mathcal{D}} - \Bar{d}\right) \cup \left\{ \left(\Bar{x}, \begin{cases}
                \Bar{y}^* \scriptstyle{\text{, if } \max (f_{\phi,\theta} (\Bar{x})) < \rho}\\
                \lfloor  f_{\phi,\theta} (\Bar{x}) / {\max (f_{\phi,\theta} (\Bar{x})) }\rfloor               \scriptstyle{\text{, otherwise}}
            \end{cases}\right) \bigg\rvert  (\Bar{x}, \Bar{y}) \in \Bar{d} \right\} $
            }
            \State{Update $\phi, \theta, \Bar{\theta}$ using $\nabla_{\phi, \theta, \Bar{\theta}} \ (\mathcal{L}_\text{clean} + \lambda \mathcal{L}_\text{noisy})$ with a single back-propagation.}
        \EndFor
    \EndFor
%     % \For{each $(\mathbf{h}^0, \mathbf{m}) \in \mathcal{D}$}
%     % \State{$\mathbf{h}^0_\text{SN} \gets K \cdot \mathbf{h}^0 / \left \| \mathbf{m} \right \|_1$}
%     % \State{$\mathcal{S} \gets \mathcal{S} \cup \left \{ \mathbf{h}^0_\text{SN} \right \}$}
%     % \State{$\mathcal{D}_\text{SN} \gets \mathcal{S}$}
%     % \EndWhile

\end{algorithmic}
}
% \vspace*{-2em}
\end{algorithm}

Each element $\emT_{ij}$ of the label transition matrix $\mT \in \mathbb{R}^{N\times N}$ is defined as the probability of a clean label $i$ to be corrupted as a noisy label $j$, i.e. $\emT_{ij} = p ( \Bar{y} = j | y = i )$.
It is well-known that a robust classifier can be obtained with the accurately estimated label transition matrix~\cite{sukhbaatar2014training,patrini2017making,hendrycks2018using_glc,xia2019anchor,yao2020dual}.
We choose a simple but accurate method that directly estimates the posterior with a clean dataset~\cite{hendrycks2018using_glc,xia2019anchor,yao2020dual}, whereas there are other more sophisticated methods that estimate the label transition matrix~\cite{mnih2012learning,reed2014training,sukhbaatar2014training,bekker2016training,goldberger2016training,patrini2017making}.
Following the assumption of the previous work~\cite{hendrycks2018using_glc,chen2020robustness,berthon2020confidence,xia2020part}, we also assume conditional independence of $\Bar{y}$ and $y$ given $x$:
\begin{equation}
\begin{split}
    p ( \Bar{y} &| y)  = p ( \Bar{y} | y ) \int p ( x | \Bar{y} , y ) dx = \int p (\Bar{y} | y, x) p (x|y) dx = \int p (\Bar{y} | x) p (x|y) dx.
\end{split}
\end{equation}
We design the transition matrix to be \textit{class-dependent}, i.e., $p ( \Bar{y} | y, x ) = p ( \Bar{y} | y)$, following recent state-of-the-art methods~\cite{liu2015classification,scott2015rate,xia2019anchor,yao2020dual}.
By parameterizing a feature extractor $\Bar{\phi}$ and a linear classifier $\Bar{\theta}$, we obtain $p (\Bar{y} | x) = f_{\Bar{\phi}, \Bar{\theta}} (x)$ where $f_{\Bar{\phi}, \Bar{\theta}}$ is the noisy classifier that consists of the linear classifier and the feature extractor trained only with the noisy labels.
If the noisy classifier $f_{\Bar{\phi}, \Bar{\theta}}$ gives a perfect prediction for the noisy data, we can estimate the transition probability $p ( \Bar{y} | y )$ using the clean samples $(x, y) \in d$ as follows (See Appendix~\ref{appendix:theory_t} for details):
\begin{equation}\label{eq:estimation_t}
    \widehat{\mT} \gets ( \sum_{(x,y)\in d} y f_{\Bar{\phi}, \Bar{\theta}} (x)^\top )  \text{diag}{}^{-1} ( \sum_{(x, y) \in d} y ).
\end{equation}
We emphasize the importance of the transition matrix estimation, as its accuracy determines the bounds of the generalization error of the classifier~\cite{xia2019anchor}.
However, the limited number of clean samples inside a single batch may yield an inaccurate transition matrix, even with the ideal $f_{\Bar{\phi}, \Bar{\theta}}$.
We analyze the upper bound of the estimation error as follows:
\begin{theorem}\label{theorem:transition_matrix_error}
Assume the Frobenius norm of the weight matrices $\Bar{\phi}_1, ..., \Bar{\phi}_{H-1}, \Bar{\theta}$ are at most $\Bar{\Phi}_1, ..., \Bar{\Phi}_{H-1}, \Bar{\Theta}$ for $H$-layer neural networks $f_{\Bar{\phi}, \Bar{\theta}}$.
Let the loss function be $L$-Lipschitz continuous w.r.t. $f_{\Bar{\phi}, \Bar{\theta}}$.
Let the activation functions be 1-Lipschitz, positive-homogeneous, and applied element-wise (such as ReLU). 
Let $x$ be upper bounded by B, i.e., for any $x \in \mathcal{X}$, $\|x\| \le B$. Then, for $\epsilon \ge 0$
\begin{equation}
\begin{split}
    p \left( \left| \widehat{\emT}_{ij} - \emT_{ij} \right| > \epsilon \right) \le \frac{NLB(\sqrt{ 2H \log 2} + 1) \Bar{\Theta} \Pi_{h=1}^{H-1} \Bar{\Phi}_i}{\sqrt{ | \Bar{\mathcal{D}} |  }} \\
    + \frac{\sqrt{ -\log ( \epsilon )} }{\sqrt{2 | \Bar{\mathcal{D}} |  }} + 2 \exp \left( -2 \epsilon^2 K  \right).
\end{split}
\end{equation}
\end{theorem}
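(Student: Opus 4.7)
The natural plan is to decompose the error by triangle inequality, introducing an intermediate ``oracle'' estimator that uses the true conditional $p(\Bar{y}=j\mid x)$ in place of the trained output $f_{\Bar{\phi},\Bar{\theta}}(x)_j$. Define
$$\tilde{\emT}_{ij} := \Bigl(\sum_{(x,y)\in d} y_i\Bigr)^{-1} \sum_{(x,y)\in d} y_i \, p(\Bar{y}=j\mid x).$$
By the conditional-independence assumption that yielded equation~(1), $\mathbb{E}[\tilde{\emT}_{ij}] = \emT_{ij}$, and
$$|\widehat{\emT}_{ij} - \emT_{ij}| \;\le\; \underbrace{|\widehat{\emT}_{ij} - \tilde{\emT}_{ij}|}_{(\mathrm{A})\text{ classifier gap}} + \underbrace{|\tilde{\emT}_{ij} - \emT_{ij}|}_{(\mathrm{B})\text{ clean-batch concentration}},$$
so a union bound reduces the theorem to bounding $p((\mathrm{A})>\tau_1)$ and $p((\mathrm{B})>\tau_2)$ with $\tau_1+\tau_2=\epsilon$.

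I would first dispatch $(\mathrm{B})$ via Hoeffding. By our batching scheme there are exactly $K$ clean samples with true label $i$; the values $p(\Bar{y}=j\mid x)$ are i.i.d.\ in $[0,1]$ with mean $\emT_{ij}$, so $p((\mathrm{B})>\epsilon)\le 2\exp(-2\epsilon^2 K)$, matching the third term of the claimed bound. For $(\mathrm{A})$ the strategy is the standard Rademacher-plus-concentration recipe. The steps are: (i) use the $L$-Lipschitz property of the loss via Ledoux--Talagrand contraction, and absorb the coordinate-wise summation over the $N$ output classes into a linear factor of $N$, to reduce the expected value of $(\mathrm{A})$ to the Rademacher complexity of the network class $\{f_{\Bar{\phi},\Bar{\theta}}\}$; (ii) apply the Golowich--Rakhlin--Shamir bound for depth-$H$ networks with $1$-Lipschitz, positive-homogeneous, element-wise activations and Frobenius-norm-constrained weights with inputs of norm at most $B$, yielding the factor $(\sqrt{2H\log 2}+1)\,\Bar{\Theta}\prod_{h=1}^{H-1}\Bar{\Phi}_h\,B/\sqrt{|\Bar{\mathcal{D}}|}$, which is the first term; (iii) upgrade this expectation bound into a probability statement via McDiarmid's inequality, using that replacing one sample in $\Bar{\mathcal{D}}$ perturbs the corresponding coordinate of the estimator by $O(1/|\Bar{\mathcal{D}}|)$, producing the Gaussian-tail contribution $\sqrt{-\log\epsilon}/\sqrt{2|\Bar{\mathcal{D}}|}$ that constitutes the second term.

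The main obstacle will be step~(iii): carefully tracking constants so that the factors $N$, $L$, $B$, $\Bar{\Theta}$, and $\prod\Bar{\Phi}_h$ appear exactly as stated, and so that the McDiarmid radius emerges in the clean $\sqrt{-\log\epsilon/(2|\Bar{\mathcal{D}}|)}$ form rather than being inflated by auxiliary constants coming from the classifier's output range. A secondary subtlety is in step~(i), where one must verify that the contraction inequality can be applied coordinate-wise through the softmax, so that the class summation enters as a linear, not multiplicative, factor. Assembling these pieces through the triangle inequality and union bound yields the three stated terms on the right-hand side.
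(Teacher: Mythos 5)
Your proposal follows essentially the same route as the paper's proof: a triangle-inequality decomposition whose two pieces are handled by Hoeffding over the $K$ clean samples per class (yielding the $2\exp(-2\epsilon^2 K)$ term) and by the Ledoux--Talagrand contraction with factor $NL$, the Golowich--Rakhlin--Shamir Rademacher bound for norm-constrained ReLU networks, and McDiarmid's inequality (yielding the first two terms). The only cosmetic difference is your choice of intermediate pivot --- the oracle estimator $\tilde{\emT}_{ij}$ built from the true posteriors $p(\Bar{y}=j\mid x)$ rather than the paper's $\mathbb{E}[\widehat{\emT}_{ij}]$ --- which swaps which piece receives which tool but produces the identical three terms.
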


    % Jonas) Shortened. Do not delete.
    % p \left( \left| \widehat{\emT}_{ij} - \emT_{ij} \right| > \epsilon \right) \le \frac{NLB(\sqrt{ 2H \log 2} + 1) \Bar{\Theta} \Pi_{h=1}^{H-1} \Bar{\Phi}_i \sqrt{ \log ( \frac{1}{\epsilon} )}}{\sqrt{ 2| \Bar{\mathcal{D}} |  }} + 2 e^{ -2 \epsilon^2 K }.

\begin{proof}
See Appendix~\ref{appendix:theory_proof}.
\end{proof}

Although the upper bound of the estimation error of the transition matrix is affected by the batch size $K$, we empirically verify that small $K$ does not necessarily harm the classification performance (See Appendix~\ref{subsubsec:4_4_2_how_sensitive_is_to_the_batch_size}).

\subsection{Learning with Estimated Transition Matrix}\label{subsec:3_3_learing_with_estimated_transition_matrix}
A clean classifier $f_{\phi,\theta}$ is trained with the estimated transition matrix $\widehat{\mT}$:
\begin{equation}\label{eq:clean}
\small
    \mathcal{L}_\text{clean} = \sum_{(x,y)\in d} \mathcal{L}_\text{CE} \left(f_{\phi,\theta} (x), y \right) + \sum_{(\Bar{x}, \Bar{y}) \in \Bar{d}} \mathcal{L}_\text{CE} \left(   \widehat{\mT}^\top f_{\phi,\theta} (\Bar{x}), \Bar{y} \right),
\end{equation}
given the cross-entropy loss function $\mathcal{L}_\text{CE}$, where the feature extractor $\phi$ and the linear classifier $\theta$ form the clean classifier $f_{\phi,\theta}$ which estimates clean labels.
If $\widehat{\mT}$ is correctly estimated, the clean classifier $f_{\phi, \theta}$ becomes statistically consistent~\cite{sukhbaatar2014training,patrini2017making,hendrycks2018using_glc,xia2019anchor,yao2020dual}.
This approach makes the clean classifier skeptical towards corrected labels, hence avoiding the miscorrection issue.

On the other hand, the noisy classifier $f_{\Bar{\phi}, \Bar{\theta}}$ is trained to model the noisy label distribution.
\begin{equation}
    \mathcal{L}_\text{noisy} = \sum_{(\Bar{x}, \Bar{y}) \in \Bar{d}} \mathcal{L}_\text{CE} \left(f_{{\Bar{\phi},\Bar{\theta}}} (\Bar{x}), \Bar{y} \right) 
\end{equation}
We emphasize that updating the noisy classifier $f_{\Bar{\phi}, \Bar{\theta}}$ every iteration is critical as it can adaptively model the ever-changing noisy label distribution on the fly, where the distribution constantly shifts as the noisy labels are actively corrected to reduce the noise level (See \S~\ref{subsec:3_4_method_relabel}).

\subsection{Efficient Training}\label{subsec:3_4_efficient_training}
Similar to \cite{vinyals2016matching,jiang2018mentornet}, we propose an efficient training scheme through weight sharing via two-head architecture, as shown in Figure~\ref{fig:2_model_architecture}.
Where the architecture closely resembles the ones of \cite{vinyals2016matching,jiang2018mentornet}, our two-head architecture only shares the feature extractor $\phi = \Bar{\phi}$.
Unlike the shared feature extractor, our architecture does not share the linear classifier since modeling both noisy and clean data distribution with a single linear classifier is impractical.
Based on the two-head architecture, the given samples require only a single inference on the feature extractor for (1) training classifiers, (2) estimating the transition matrix, and (3) correcting labels, which makes model training highly efficient.
Thus, we define the clean and noisy classifier as $f_{\phi, \theta}$ and $f_{\phi, \bar{\theta}}$, respectively, to produce our final objective function $\mathcal{L}$:
\begin{equation}\label{eq:final_loss}
\mathcal{L} = \mathcal{L}_\text{clean} + \lambda \mathcal{L}_\text{noisy}
\end{equation}
where $\lambda$ is a loss balancing factor. 
In order to prevent over-fitting on $\Bar{d}$, we introduce $\lambda$ to the final objective function. 
We search for the optimal hyperparameter $\lambda$ for all of our experiments (See Appendix~\ref{appendix:searching_the_optimal_hyperparameter}).

\noindent \textbf{Efficiency Analysis.}
Compared to the vanilla training scheme, which assumes that all labels are clean, we only add a single linear classifier $\Bar{\theta}$ with only $N$ additional parameters.
Also, our loss only requires a single back-propagation, where the added linear classifier has a negligible computational burden.
Our training scheme stands out even more compared to the existing MAML-based methods~\cite{wu2020learning,zheng2021meta} or multi-stage training~\cite{hendrycks2018using_glc,bahri2020deep} (See $\S$~\ref{subsec:4_2_training_time_comparison} and Figure~\ref{fig:1_efficiency}).

\subsection{Label Correction}\label{subsec:3_4_method_relabel}
In this paper, we focus on the efficient, on-the-fly estimation of the label transition matrix to combat label miscorrection.
To further demonstrate the effectiveness of our method, we employ a naïve label correction strategy where we feed each noisy set sample $x \in \Bar{d}$ to the clean classifier $f_{\phi, \theta}$ to produce a probability vector.
If the maximum probability $\max(f_{\phi, \theta}(x))$ is bigger than the threshold $\rho$, we correct its label to a more probable label.
This strategy relies only on the most recent prediction of the model mid-training, so the decision is prone to change.
Formally, we can describe the relabeled $\hat y$,
\begin{equation}
\small
    \hat y = \begin{cases}
            \Bar{y}^*,& \text{if } \max (f_{\phi,\theta} (\Bar{x})) < \rho\\
            \lfloor f_{\phi,\theta} (\Bar{x}) / \max (f_{\phi,\theta} (\Bar{x})) \rfloor,              & \text{otherwise}
        \end{cases}
\end{equation}
where $\lfloor \cdot \rfloor$ denotes floor function and $\Bar{y}^*$ denotes the original label from $\bar{d}$.
$\Bar{y}^*$ differs from $\Bar{y}$; the former denotes the original label from the noisy dataset, whereas the latter is continuously corrected by the above strategy.
Even with this simple strategy, our model shows better performance compared to the state-of-the-art methods.
The experimental results suggest that replacing this strategy may further improve the model performance.
\section{Experiments}\label{sec:4_experiments}
In this section, we evaluate our proposed learning method, FasTEN, in terms of predictive performance ($\S$~\ref{subsec:4_1_predictive_performance_comparison}) and efficiency ($\S$~\ref{subsec:4_2_training_time_comparison}).
We also validate the label correction performance to demonstrate that our method is better in correcting noisy labels ($\S$~\ref{subsec:4_3_relabel_performance} and Appendix~\ref{appendix:exp_detection}) and experimentally show the robustness of our proposed method towards miscorrected labels. ($\S$~\ref{subsec:4_4_robustness_to_miscorrected_labels}).
We further analyze whether our method successfully estimates the label transition matrix in the case where the label correction shifts the true label transition matrix ($\S$~\ref{subsec:4_5_effect_of_on_the_fly}) or not ($\S$~\ref{subsec:4_6_empirical_convergence_analysis}).
Additional experimental results and further analyses are described in Appendix~\ref{appendix:additional_experiments}.
We provide the source codes\footnote{\url{https://github.com/hyperconnect/FasTEN}} for the reproduction of the experiments conducted in this paper.

\begin{table*}[t]
\caption{
Performance comparison on CIFAR-10/100 datasets under various noise level. Test accuracy (\%) with 95\% confidence interval of 5-runs is provided.}
\label{tab:1_performance_synthetic}
\small
\centering
\resizebox{\textwidth}{!}{%
\begin{tabular}{c|c|cccc|cc}
\toprule
\multirow{2}{*}{} & \multirow{2}{*}{Method} & \multicolumn{4}{c|}{Symmetric Noise Level} & \multicolumn{2}{c}{Asymmetric Noise Level} \\ 
& & 20\% & 40\% & 60\% & 80\% & 20\% & 40\% \\
\midrule
\multirow{8}{*}{\STAB{\rotatebox[origin=c]{90}{CIFAR-10}}} 
& L2RW & 88.26 \footnotesize $\pm$ 0.79 & 83.76 {\footnotesize $\pm$ 0.54} & 74.54 {\footnotesize $\pm$ 1.54} & 42.60 {\footnotesize $\pm$ 1.71} & 88.79 {\footnotesize $\pm$ 0.63} & 85.86 {\footnotesize $\pm$ 0.87} \\
& MW-Net & 89.76 {\footnotesize $\pm$ 0.31}  & 86.52 {\footnotesize $\pm$ 0.28}  & 81.68 {\footnotesize $\pm$ 0.25}  & 56.56 {\footnotesize $\pm$ 3.07}  & 91.31 {\footnotesize $\pm$ 0.25}  & 88.69 {\footnotesize $\pm$ 0.37}  \\
& Deep kNN & 90.02 {\footnotesize $\pm$ 0.35} & 87.27 {\footnotesize $\pm$ 0.39} & 82.80 {\footnotesize $\pm$ 0.55} & 68.30 {\footnotesize $\pm$ 1.21} & 89.97 {\footnotesize $\pm$ 0.48}  & 84.56 {\footnotesize $\pm$ 0.87} \\
\cmidrule{2-8} 
& GLC & 89.66 {\footnotesize $\pm$ 0.10}  & 85.30 {\footnotesize $\pm$ 0.73} & 80.34 {\footnotesize $\pm$ 0.73} & 67.44 {\footnotesize $\pm$ 1.50} & 91.56 {\footnotesize $\pm$ 0.66} & \textbf{89.76 {\footnotesize $\pm$ 0.89}} \\
& MLoC & 90.50 {\footnotesize $\pm$ 0.71} & 87.20 {\footnotesize $\pm$ 0.35} & 81.95 {\footnotesize $\pm$ 0.44} & 54.64 {\footnotesize $\pm$ 4.04} & 91.15 {\footnotesize $\pm$ 0.16} & 89.35 {\footnotesize $\pm$ 0.45} \\
\cmidrule{2-8} 
& MLaC & 89.75 {\footnotesize $\pm$ 0.62}  & 86.63 {\footnotesize $\pm$ 0.56} & 82.20 {\footnotesize $\pm$ 0.81} & 71.94 {\footnotesize $\pm$ 2.22} & 91.45 {\footnotesize $\pm$ 0.32} & \textbf{90.26 {\footnotesize $\pm$ 0.48}} \\
& MSLC & 90.94 {\footnotesize $\pm$ 0.45} & 88.36 {\footnotesize $\pm$ 0.80} & 83.93 {\footnotesize $\pm$ 1.21} & 64.90 {\footnotesize $\pm$ 4.84} & \textbf{91.45 {\footnotesize $\pm$ 1.35}} & 89.26 {\footnotesize $\pm$ 0.52} \\
\cmidrule{2-8} 
& \textbf{FasTEN {\scriptsize(ours.)}} & \textbf{91.94 {\footnotesize $\pm$ 0.28}} & \textbf{90.07 {\footnotesize $\pm$ 0.17}} & \textbf{86.78 {\footnotesize $\pm$ 0.31}} & \textbf{79.52 {\footnotesize $\pm$ 0.78}} & \textbf{92.29 {\footnotesize $\pm$ 0.10}} & \textbf{90.43 {\footnotesize $\pm$ 0.31}} \\
\midrule
\multirow{9}{*}{\STAB{\rotatebox[origin=c]{90}{CIFAR-100}}} 
& L2RW & 57.79 {\footnotesize $\pm$ 1.88} & 44.82 {\footnotesize $\pm$ 4.30} & 30.01 {\footnotesize $\pm$ 1.74} & 10.71 {\footnotesize $\pm$ 1.79} & 59.11 {\footnotesize $\pm$ 2.74} & 55.12 {\footnotesize $\pm$ 3.40} \\
& MW-Net & 66.73 {\footnotesize $\pm$ 0.78} & 59.44 {\footnotesize $\pm$ 0.91} & 49.19 {\footnotesize $\pm$ 1.57} & 19.04 {\footnotesize $\pm$ 1.21} & 67.90 {\footnotesize $\pm$ 0.78} & 64.50 {\footnotesize $\pm$ 0.34} \\
& Deep kNN & 59.60 {\footnotesize $\pm$ 0.97} & 52.48 {\footnotesize $\pm$ 1.37} & 39.90 {\footnotesize $\pm$ 0.60} & 23.39 {\footnotesize $\pm$ 0.75} & 57.71 {\footnotesize $\pm$ 0.47} & 50.23 {\footnotesize $\pm$ 1.12}\\
\cmidrule{2-8} 
& GLC & 60.99 {\footnotesize $\pm$ 0.64} & 49.00 {\footnotesize $\pm$ 4.33} & 33.38 {\footnotesize $\pm$ 4.09} & 20.38 {\footnotesize $\pm$ 1.35} & 64.43 {\footnotesize $\pm$ 0.43} & 54.20 {\footnotesize $\pm$ 0.86}\\
& MLoC & \textbf{68.16 {\footnotesize $\pm$ 0.41}} & 62.09 {\footnotesize $\pm$ 0.33} & \textbf{54.49 {\footnotesize $\pm$ 0.92}} & 20.23 {\footnotesize $\pm$ 1.86} & 69.20 {\footnotesize $\pm$ 0.59} & 66.48 {\footnotesize $\pm$ 0.56} \\
\cmidrule{2-8} 
& MLaC & 49.81 {\footnotesize $\pm$ 5.59} & 35.15 {\footnotesize $\pm$ 5.75} & 20.15 {\footnotesize $\pm$ 2.81} & 12.85 {\footnotesize $\pm$ 0.87} & 56.46 {\footnotesize $\pm$ 3.54} & 49.20 {\footnotesize $\pm$ 3.23} \\
& MSLC & \textbf{68.62 {\footnotesize $\pm$ 0.60}} & \textbf{63.30 {\footnotesize $\pm$ 0.49}} & 53.83 {\footnotesize $\pm$ 0.70} & 21.07 {\footnotesize $\pm$ 5.20} & \textbf{70.86 {\footnotesize $\pm$ 0.30}} & \textbf{66.99 {\footnotesize $\pm$ 0.69}} \\
\cmidrule{2-8} 
& \textbf{FasTEN {\scriptsize(ours.)}} & \textbf{68.75 {\footnotesize $\pm$ 0.60}} & \textbf{63.82 {\footnotesize $\pm$ 0.33}} & \textbf{55.22 {\footnotesize $\pm$ 0.64}} & \textbf{37.36 {\footnotesize $\pm$ 1.15}} & \textbf{70.35 {\footnotesize $\pm$ 0.51}} & \textbf{67.93 {\footnotesize $\pm$ 0.53}} \\
\bottomrule
\end{tabular}%
}
\end{table*}
\noindent \textbf{Baselines using the small clean dataset.} We deliberately choose the baselines that utilize the small clean dataset in learning with noisy labels.
These baselines are categorized in the following three types.
\textit{Re-weighting}: \textbf{L2RW}~\cite{ren2018learning} learns to assign weights to training samples based on their gradients.
\textbf{MW-Net}~\cite{shu2019meta} trains an explicit weighting function with the training samples. 
\textbf{Deep kNN}~\cite{bahri2020deep} applies the k-nearest neighbor algorithm to the logit layer of classifiers to find noisy samples.
\textit{Label transition matrix estimation}: \textbf{GLC}~\cite{hendrycks2018using_glc} estimates the label transition matrix using the small clean dataset.
\textbf{MLoC}~\cite{wang2020training} considers the label transition matrix as trainable parameters to be obtained through meta-learning.
\textit{Label correction}: 
\textbf{MLaC}~\cite{zheng2021meta} trains a label correction network as a meta-process to provide corrected labels.
\textbf{MSLC}~\cite{wu2020learning} uses soft
labels with loss balancing weight through meta-gradient descent step under the guidance of the clean dataset.

% Here, we exclude the method that uses only noisy dataset~\cite{patrini2017making, li2021provably, xia2020part, yao2020dual, xia2019anchor} or semi-supervised learning~\cite{li2020dividemix, nishi2021augmentation} where performance comes from data augmentation.

\subsection{Predictive Performance Comparison }\label{subsec:4_1_predictive_performance_comparison}

\noindent \textbf{CIFAR-10/100 with Synthetic Noise.} CIFAR-10/100~\cite{krizhevsky2009learning} have been widely adopted to assess the robustness of the methods to noisy labels.
Since CIFAR-10/100 are known as clean datasets, labels are synthetically manipulated to contain noisy labels, injecting two types of noise: symmetric and asymmetric.
\textbf{Symmetric:} The labels are randomly flipped with uniform distribution. \textbf{Asymmetric:} the labels are flipped with class-dependent distribution, following the evaluation protocol of \cite{patrini2017making,yao2019safeguarded}.
We claim that most studies report the performance highly overfitted to the test set without hyperparameter tuning on the validation set~\cite{wu2020learning,li2020dividemix,nishi2021augmentation,ortego2020multi}.
Moreover, baseline models employ different backbone networks, making it challenging to dissect the performance improvement whether it originated from each method or the backbone networks.
Therefore, we first extract 5K samples as the validation set from the training set containing 50K samples and further extract 1K samples as the clean dataset.
Then, we unify the backbone network as ResNet-34~\cite{he2016deep}, which is widely adopted in various baselines~\cite{wu2020learning,liu2020early}.
Note that we do our best to maintain the experimental settings of each method, including the hyperparameters written in the original paper.
Detailed settings are deferred to Appendix~\ref{appendix:experimental_details}. 

\noindent \textbf{Results.} Table~\ref{tab:1_performance_synthetic} summarizes the evaluation results on CIFAR-10/100.
For both CIFAR-10/-100, our proposed FasTEN achieves state-of-the-art performance on various noise levels within 95\% confidence intervals.
Especially, under a high noise level (80\%), our FasTEN considerably outperforms the baselines with small variance on performance, which implies the robustness of our method~\cite{li2016low,li2017mixture}.
These results demonstrate that our proposed method performs well in learning with noisy labels, especially considering its training efficiency (See $\S$~\ref{subsec:4_2_training_time_comparison}).

\begin{table}[t]
\scriptsize
\centering
\caption{
Test accuracy (\%) comparison on Clothing1M dataset with real-world label noise.
Rows with \(\dagger\)~denote results directly borrowed from~\cite{zheng2021meta} and $^\star$~denotes the result directly borrowed from~\cite{li2021provably}.
All the other results except L2RW~\cite{ren2018learning} are taken from original papers.
}
\label{tab:2_performance_real}
\begin{tabular}{c|lc}
\toprule
& Method & Top-1 Accuracy \\
\midrule
\multirow{7}{*}{\STAB{\rotatebox[origin=c]{90}{Clean set X}}}
& Forward\(^\star\) & 69.91 \\
& T-Revision\(^\star\) & 70.97 \\
& casualNL & 72.24\\
& IF & 72.29 \\
& VolMinNet\(^\star\) & 72.42 \\
& DivideMix & 74.76 \\
& AugDesc & 75.11 \\
\midrule
\multirow{6}{*}{\STAB{\rotatebox[origin=c]{90}{Clean set O}}}
& MLoC & 71.10 \\
& L2RW & 72.04  $\pm$ 0.24 \\
& GLC\(^\dagger\)  & 73.69 \\
& MW-Net\(^\dagger\)  & 73.72 \\
& MSLC  & 74.02 \\
& MLaC\(^\dagger\)  & 75.78 \\
\midrule
\multirow{2}{*}{\STAB{\rotatebox[origin=c]{90}{Ours}}}
& FasTEN w/o LC & 77.07 { $\pm$ 0.52} \\
& \textbf{FasTEN} & \textbf{77.83} \textbf{ $\pm$ 0.17} \\
\bottomrule
\end{tabular}
\end{table}
% \vspace*{-1em}

% & Joint Learning\(^\dagger\) & 72.23 \\
% & MLNT\(^\dagger\) & 73.47 \\

% \begin{table*}[t]
% \caption{
% Test accuracy (\%) comparison on Clothing1M dataset with real-world label noise. The results except L2RW~\cite{ren2018learning} are taken from original papers. }
% \label{tab:2_performance_real}
% % \vspace{-0.2cm}
% \centering
% \resizebox{0.95\textwidth}{!}{%
% \begin{tabular}{c|ccccccc|cc}
% \hline
% \toprule
% \multirow{2}{*}{Method} & \multirow{2}{*}{L2RW} & \multirow{2}{*}{MW-Net} & \multirow{2}{*}{GLC} & \multirow{2}{*}{MLoC}  & \multirow{2}{*}{MLaC} & \multirow{2}{*}{MSLC} &
% \multirow{2}{*}{DivideMix} &
% FasTEN & \textbf{FasTEN} \\
% &  &  &  &  &  &  &  & {\small w/o LC} &  \textbf{{\small(ours.)}}
% \\
% \midrule
% Accuracy (\%) &
% 72.04 \small $\pm$ 0.24 &
% 73.72 &
% 73.69 &
% 71.10 &
% 75.78 &
% 74.02 &
% 74.76 &
% 77.07 {\small $\pm$ 0.52}
% & \textbf{77.83} \textbf{\small $\pm$ 0.17}\\
% \bottomrule
% \end{tabular} %
% }
% \end{table*}
%\noindent
\noindent \textbf{Clothing1M with Real-world Noise.}
Clothing1M~\cite{xiao2015learning} is a noisy real-world dataset that consists of one million samples with additional 47K human-annotated clean samples.
We use its original splits of clean and noisy data.
For a fair comparison, we employ ResNet-50 architecture pretrained with the ImageNet dataset~\cite{deng2009imagenet} for the initial backbone architecture.
Evaluation results on Clothing1M are summarized in Table~\ref{tab:2_performance_real}.
% Except for L2RW that was not evaluated on Clothing1M, we borrow the reported performance of each baseline from its original paper.

\noindent \textbf{Further baselines.}
We further compare our proposed FasTEN with additional baselines that have already reported their performance on Clothing1M dataset.
Since the data split of Clothing1M dataset is the same for all the baselines, we simply obtain the performance of the baselines from their original papers and report the performance in Table \ref{tab:2_performance_real}.
% \textbf{Forward}~\cite{patrini2017making} estimates the transition matrix with most certain sample in noisy dataset for each class.
\textbf{DivideMix}~\cite{li2020dividemix} and \textbf{AugDesc}~\cite{nishi2021augmentation} leverages semi-supervised learning with various data augmentation strategies.
\textbf{Forward}~\cite{patrini2017making}, \textbf{T-Revision}~\cite{xia2019anchor}, \textbf{IF}~\cite{jiang2022an}, and \textbf{causalNL}~\cite{yao2021instance}, and \textbf{VolMinNet}~\cite{li2021provably} are transition matrix estimation methods that use certain data points without clean data points.

\noindent \textbf{Results.}
As shown in Table~\ref{tab:2_performance_real}, our proposed FasTEN achieves remarkable performance on Clothing1M which contains instance-dependent noisy labels, beating the baselines by a large margin.
This evaluation result indicates that our proposed FasTEN is more applicable in real-world problems where label corruption frequently occurs, although it does not directly target to address the problem of instance-dependent noisy labels.
Similar to previous observations \cite{menon2016learning,hendrycks2018using_glc}, we suspect that using the transition matrix seems to combat instance-dependent noise to some extent.
Also, not only that our method shows superior performance over all the baselines that use the small clean set, but it also surpasses the semi-supervised learning-based methods (DivideMix and AugDesc) without any complex augmentation techniques.
Finally, FasTEN shows better performance than T-Revision, causalNL, IF, and VolMinNet, which estimate the transition matrix without the small clean data (this is not a fair comparison).
This result indicates that using the small clean data is effective in estimating the transition matrix accurately, leading to performance improvement eventually.

\subsection{Training Time Comparison}\label{subsec:4_2_training_time_comparison}
\begin{table*}[t]
\caption{Training time comparison on CIFAR-10 dataset with 80\% symmetric noise. Time (hours) per total training on a single RTX 2080Ti GPU are provided with the relative ratio compared to our method. }
\label{table:3_appendix_efficiency}
% \vspace{-0.2cm}
\centering
\small
\resizebox{\textwidth}{!}{%
\begin{tabular}{c|ccccccc|c}
\hline
\toprule
\multirow{2}{*}{Method} & L2RW & MW-NET & Deep kNN & GLC & MLoC & MLaC & MSLC & \textbf{FasTEN} \\
 & \cite{ren2018learning} & \cite{shu2019meta} & \cite{bahri2020deep} & \cite{hendrycks2018using_glc} & \cite{wang2020training} & \cite{zheng2021meta} & \cite{wu2020learning} & \textbf{(ours.)} \\
\midrule
% Iteration Time & 0.416 & 0.298 & 0.127 & 0.0846 & 0.422 & 0.527 & 0.279 & \multirow{2}{*}{\textbf{0.0713}} \\
% {\small(Relative to Ours.)} & {\small(5.84x)} & {\small(4.19x)} & {\small(1.78x)} & {\small(1.19x)} & {\small(5.92x)} & {\small(7.39x)} & {\small(3.91x)} & \\
% \midrule
Total Training Time & 4.78 & 2.63 & 2.32 & 2.29 & 7.13 & 10.2 & 2.51 & \multirow{2}{*}{\textbf{1.54}} \\
{\small(Relative to Ours.)} & {\small(3.11x)} & {\small(1.71x)} & {\small(1.51x)} &  {\small(1.49x)} & {\small(4.64x)} & {\small(6.64x)} & {\small(1.64x)} &  \\
\bottomrule
\end{tabular} %
}
\end{table*}
% \vspace*{-0.5em}

\noindent \textbf{Setup.}
To verify the efficiency of our proposed FasTEN, we compare it with the baselines in terms of accuracy by total training time.
Total training time is measured on CIFAR-10/-100, respectively, with a single RTX 2080Ti GPU.
Test accuracy shows the predictive performance on CIFAR-10/-100 with 20\% and 80\% symmetric noise ratios, the mildest and most severe noise conditions, respectively.
Since Deep kNN and GLC require multiple training stages, the summation of all the hours needed for each training phase is provided.

\noindent \textbf{Results.}
Figure~\ref{fig:1_efficiency} shows that our FasTEN, which learns the label transition matrix with the single back-propagation in the single-training stage, makes model training more efficient than other baselines that need multiple back-propagation or multiple training stages while showing better performance.
Table~\ref{table:3_appendix_efficiency} shows the total training hours of each baseline, including our FasTEN.
Our method provides the training time speedup of minimum $\times$1.49 to maximum $\times$6.64.

\subsection{Label Correction Performance Comparison}
\label{subsec:4_3_relabel_performance}
\begin{table*}[t]
\caption{
Label correction performance comparison on CIFAR-10 with symmetric 80\% noise. 
Accuracy (\%) and Negative Log Likelihood (NLL) loss are calculated using the true labels before the synthetic noise is injected.
Performance of the trained model on all training samples (Overall) and incorrectly labeled training samples (Incorrect) is measured.
$\dagger$ denotes performance extracted from the meta model.
}
\label{tab:3_performance_relabel}
\centering
\resizebox{\textwidth}{!}{%
\begin{tabular}{cc|ccccccccc|cc}
\toprule
 \multicolumn{2}{c|}{\multirow{2}{*}{Method}} & L2RW & MW-Net & Deep kNN & GLC & MLoC & MLaC & MLaC${}^\dagger$ & MSLC & MSLC${}^\dagger$ & FasTEN & \textbf{FasTEN} \\ 
 &  & \cite{ren2018learning} & \cite{shu2019meta} & \cite{bahri2020deep} & \cite{hendrycks2018using_glc} & \cite{wang2020training} & \multicolumn{2}{c}{\cite{zheng2021meta}} & \multicolumn{2}{c}{\cite{wu2020learning}} & {\small w/o LC} &  {\small\textbf{(ours.)}} \\

\midrule
\multirow{2}{*}{Acc.} &  Overall & 0.4450 & 0.6024 & 0.6471 & 0.6900 & 0.6261 & 0.7567 & 0.7672 & 0.6762 & 0.2821 & 0.7559 & \textbf{0.7847} \\
\cmidrule{2-13}
&  Incorrect & 0.4447 & 0.6024 & 0.6483 & 0.6903 & 0.6257 & 0.7569 & 0.7382 & 0.6755 & 0.2836 & 0.7560 & \textbf{0.7861} \\
\midrule
\multirow{2}{*}{NLL} & Overall & 1.6684 & 1.6961 & 1.6085 & 1.3904 & 1.7492 & 0.9868 & 1.7004 & 1.2694 & 1.5989 & 1.0057 & \textbf{0.8889} \\
\cmidrule{2-13}
 & Incorrect & 1.6674 & 1.6957 & 1.6084 & 1.3881 & 1.7493 & 0.9851 & 1.7299 & 1.2722 & 1.5990 & 1.0033 & \textbf{0.8877} \\
\bottomrule
\end{tabular}%
}
\end{table*}

% Correct & 
% 0.446 & 0.6022 & 0.6442 & 0.6893 & 0.6271 & 0.7563 & \textbf{0.8415} & 0.6781 & 0.2784 & 0.7811 \\
We analyze the predictive performance of the baseline methods on all the training samples (Overall) and the wrongly labeled subset of them (Incorrect), respectively.
Table~\ref{tab:3_performance_relabel} demonstrates that our method can successfully correct the noisy labels, where using the label correction further improves the correction performance.
This also implies that our FasTEN may be helpful in further cleansing the noisy training set.

We also compare the performance between Overall and Incorrect cases.
Re-weighting (L2RW, MW-Net, Deep kNN) and transition matrix estimation-based methods (GLC, MLoC) show similar performance between two cases: Overall and Incorrect.
However, the performance of the meta-model of MLaC is worse for the Incorrect case, which indicates that the correction from the meta-model is less effective where the labels are wrong.
Also, notable underperformance of the meta-model of MSLC may indicate the inefficacy of the meta-model.
We also analyze the meta-model of the re-weighting methods in the Appendix~\ref{appendix:exp_detection}, where they do not distinguish the wrongly labeled samples well.

\begin{figure*}[t]
\subfloat[Robustness]{\includegraphics[width=0.34\columnwidth]{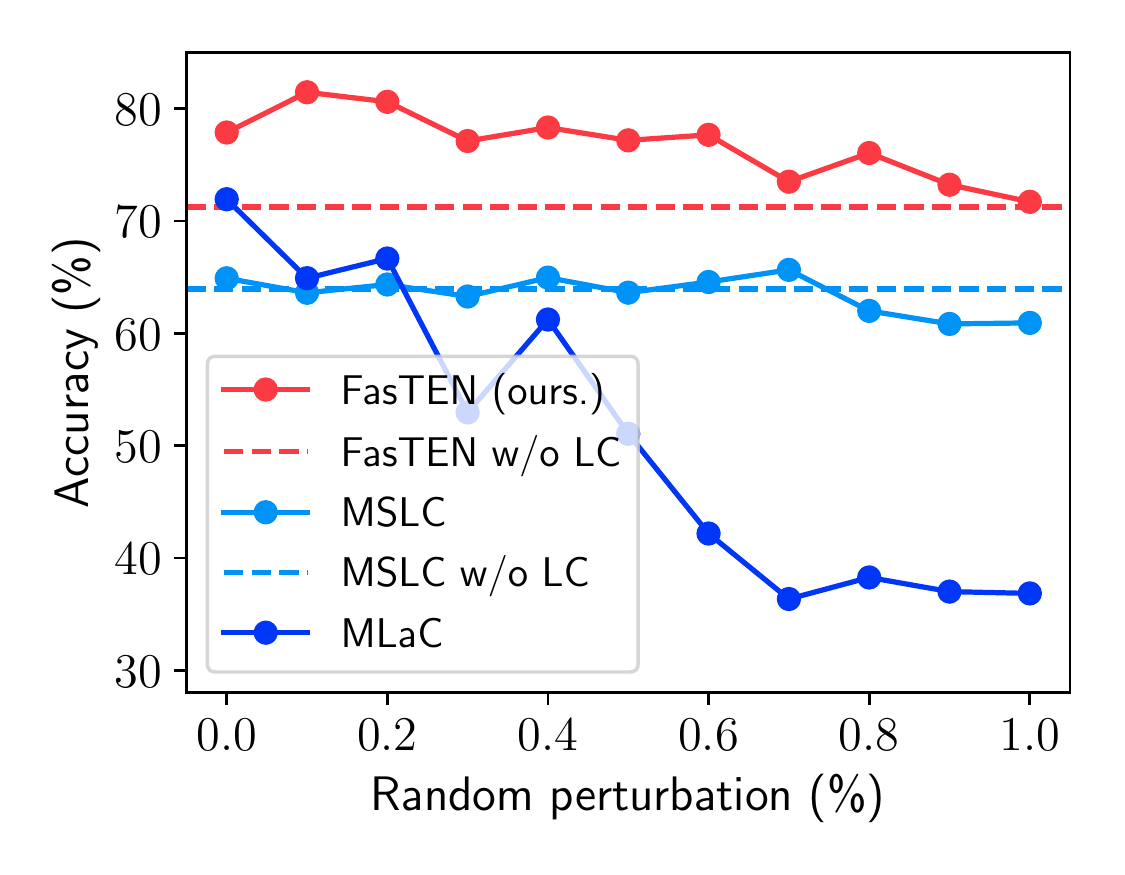}\label{fig:2_robustness}}
\subfloat[Transition matrix estimation]{\includegraphics[width=0.65\columnwidth]{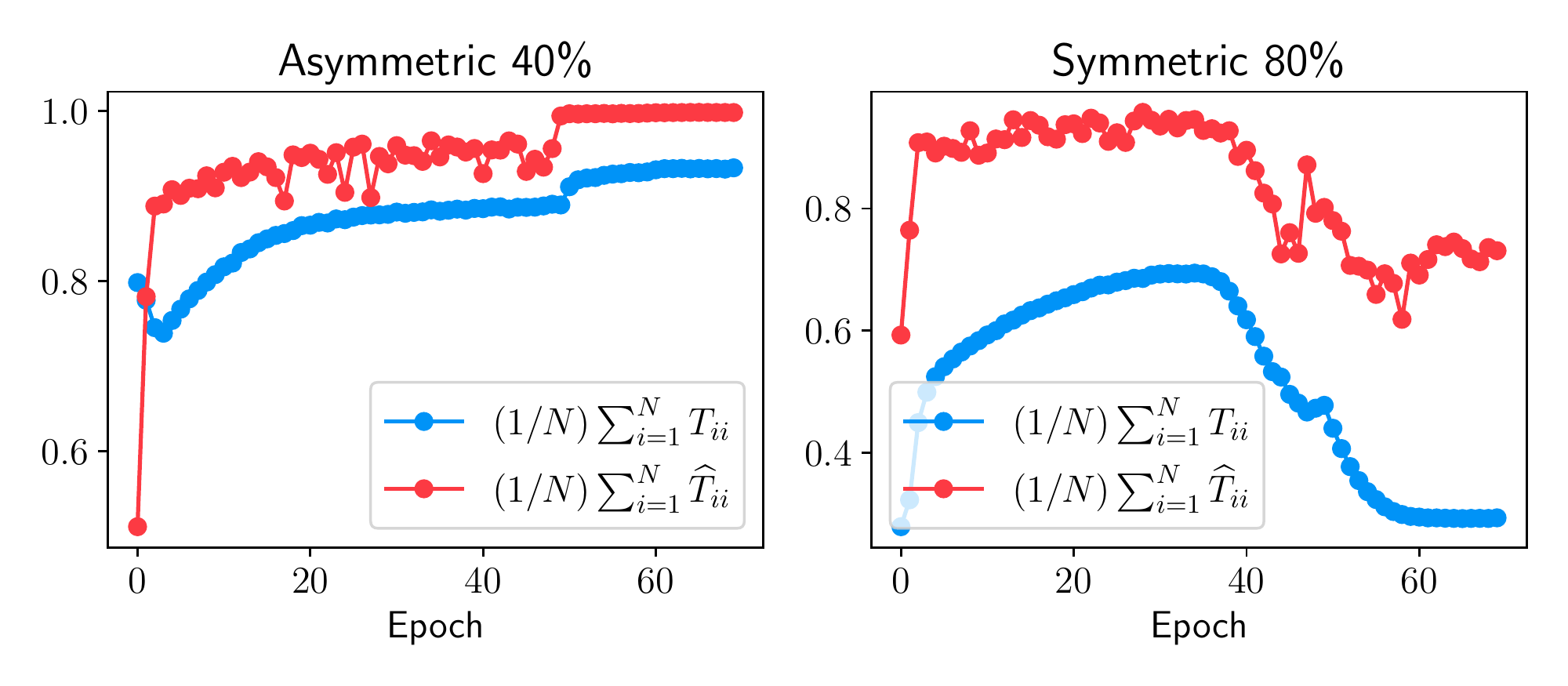}\label{fig:3_transition_matrix}}
\caption{
(a) Robustness to miscorrected labels on CIFAR-10 with various perturbation strength.
Test accuracy (\%) of baselines and baselines without the label correction is provided.
(b) The plot for the mean of the diagonal term in true transition matrix $\mT$ and our estimated transition matrix $\widehat{\mT}$ according to the epoch on CIFAR-10 dataset with  symmetric 80\% and asymmetric 40\% noise.
}
% \vspace*{-1em}
\end{figure*}

\subsection{Robustness to Miscorrection: What Happens if Labels are Wrongly Corrected?}\label{subsec:4_4_robustness_to_miscorrected_labels}
This subsection illustrates the robustness of our label correction method to miscorrected labels by comparing it with other label correction methods (MLaC and MSLC) which blindly trust the miscorrected labels as the ground-truth, where we verify the imperfect corrections (See $\S$~\ref{subsec:4_3_relabel_performance}).
We examine how much this behavior deteriorates the predictive performance.

\noindent \textbf{Setup.}
We experiment on CIFAR-10 with symmetric 80\% noise where there are a maximum number of noisy labels to correct. 
To simulate the miscorrection, we perturb the corrected labels by injecting artificial noise.
We control the degree of random perturbation to observe the robustness of each method on various levels of miscorrection.
We further assess the robustness of our FasTEN and MSLC by comparing it with the performance obtained without label correction.

\noindent \textbf{Results.}
Figure~\ref{fig:2_robustness} shows our proposed FasTEN outperforms MLaC and MSLC on all the degrees of the random perturbation.
MLaC shows steep performance degradation when perturbation worsens, i.e., there are more miscorrected labels.
This observation reveals the susceptibility of MLaC.
MSLC shows trivial performance gains when labels are corrected, implying that it is not using the full benefits of label correction.
Furthermore, when highly perturbed, MSLC performance worsens if it attempts to correct the labels.
In contrast, the label correction of our FasTEN improves performance even in harsh situations.
FasTEN does not degrade performance even if the correction becomes useless (100\% perturbation).
These observations show that our FasTEN builds a more robust classifier to miscorrected labels through its efficient estimation of the label transition matrix, acting as a safeguard combating the miscorrected labels.

\subsection{On-the-fly Estimation of the Label Transition Matrix}\label{subsec:4_5_effect_of_on_the_fly}
Our proposed FasTEN newly estimates the label transition matrix on every iteration, where the matrix is constantly shifted by label correction.
To assess the matrix estimation quality, we compare it with the true label transition matrix.

\noindent \textbf{Setup.}
We train FasTEN on CIFAR-10 with symmetric 80\% and asymmetric 40\% noise, which are harsh conditions on symmetric and asymmetric noise injection, respectively.
We compare the estimated label transition matrix $\widehat{\mT}$ with the true label transition matrix $\mT$ by observing the mean of diagonal term values for each epoch.
The mean of the diagonal term in the transition matrix represents the average of the probability that a sample is mapped to a clean label.

\noindent \textbf{Results.}
Figure~\ref{fig:3_transition_matrix} shows the overall tendency of the estimated transition matrix (red) to follow the true label matrix (blue).
In the asymmetric 40\% setting, diagonal term values of the true label transition matrix $\mT$ gradually increases (blue), which indicates the dataset is cleansed by the label correction.
However, in the symmetric 80\% case, diagonal term values of the true transition matrix $\mT$ decreases at the middle of the training.
As we maintain the fixed threshold $\rho$, the total number of corrected samples decreases.
Nonetheless, we can conclude that the transition matrix is successfully estimated on shifting noise levels.

Additionally, we observe that the estimated transition matrix $\widehat{\mT}$ shows higher mean values, i.e., being overconfident on the clean dataset samples.
Theoretically, $f_{\phi, \Bar{\theta}}$ should correctly approximate the noisy label distribution given enough number of clean samples (See Appendix~\ref{appendix:theory_proof}), but it seems to be overfitting to the clean dataset in practice.
This observation is consistent with the popular belief that neural networks tend to learn clean samples first and noisy samples later~\cite{arpit2017closer}.
For better matrix estimation to yield a more robust classifier~\cite{han2020sigua,xia2019anchor,mirzasoleiman2020coresets,yao2020dual}, it appears that we need to address the overfitting through additional components.

\subsection{Empirical Convergence Analysis on Estimating the Label Transition Matrix}
\label{subsec:4_6_empirical_convergence_analysis}

\noindent \textbf{Setup.}
This section analyzes the convergence of estimation error between the true label transition matrix $\mT$ and the estimated transition matrix $\widehat{\mT}$, comparing our FasTEN to other methods, MLoC and GLC, which learn the transition matrix.
For fair comparison, we exclude the label correction for our method.

\noindent \textbf{Results.}
Figure~\ref{fig:convergence} shows the difference between the probability distribution of the true label transition matrix $\mT$ and the estimated transition matrix $\widehat{\mT}$ for each iteration, where Pearson $\chi^2$-divergence is used to measure the discrepancy between the two matrices.
GLC error remains fixed (dotted line) as it estimates the transition matrix only once in the entire learning process.
The decrease of MLoC error is extremely slow (blue line), implying the high dependence of the initialization of $\widehat{\mT}$ and its ineffectiveness on estimation.
%The plot for the MLoC error seems to be constant, but it is, in fact, slowly decreasing.
%It implies that MLoC is likely to be highly dependent on the initialization of $\widehat{\mT}$.
Although our FasTEN does not require multiple stages of training and produces the single mini-batch-based estimate every iteration, it shows fast convergence with a similar estimation error to GLC, which uses all the available data.
\begin{figure}[t]
% \begin{wrapfigure}{R}{0.35\textwidth}
% \vspace{-3.5em}
% \begin{figure}[t] %%% t: top, b: bottom, h: here
\centering
% \vspace{-1.0em}
\includegraphics[width=0.45\linewidth]{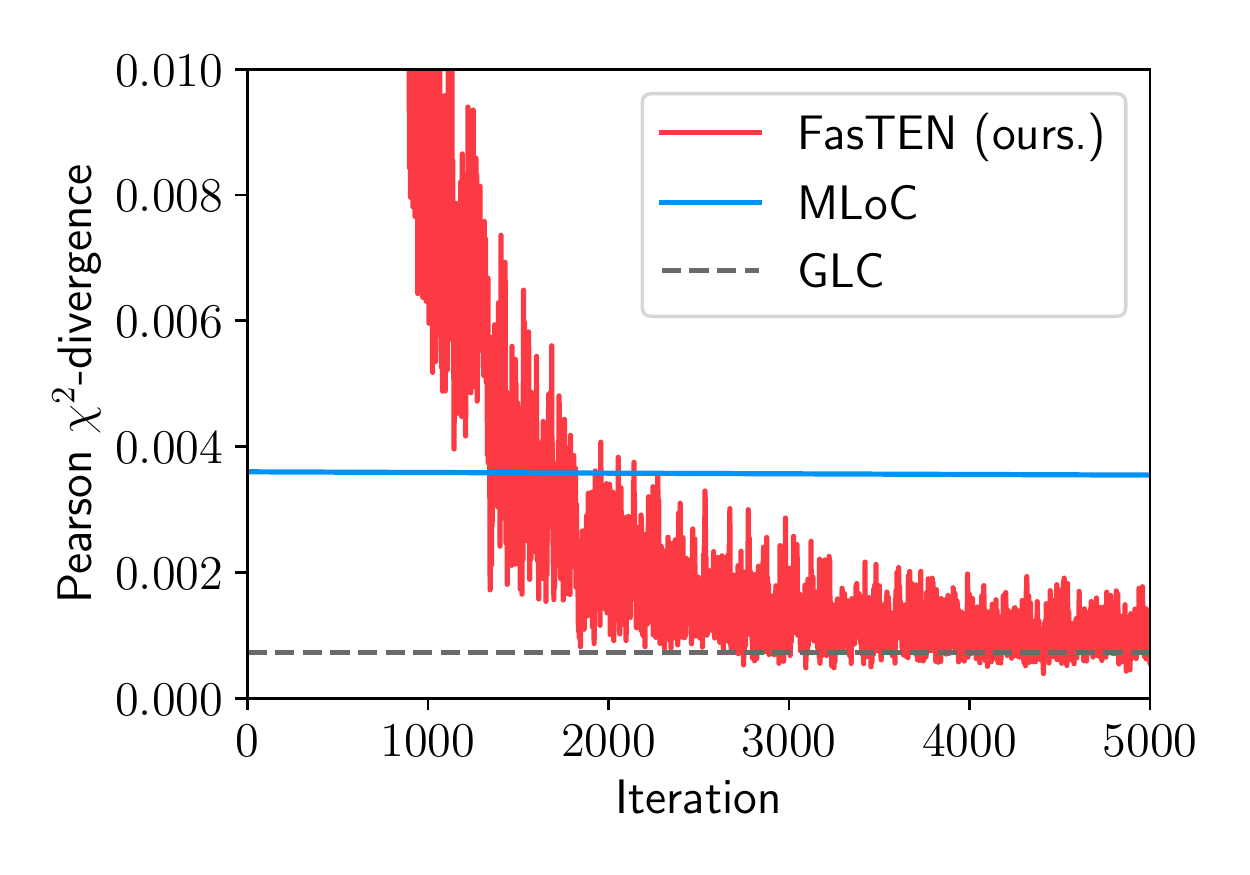}
% \vspace{-1.5em}
\caption{
Plot of transition matrix estimation error for every iteration. 
Pearson $\chi^2$-divergence of our FasTEN, MLoC and GLC is provided.
}
% \vspace{-1em}
\label{fig:convergence}
\end{figure}

\section{Conclusion}\label{sec:5_conclusion}
In this work, we propose a robust and efficient method, FasTEN, which efficiently learns a label transition matrix that mitigates the label miscorrection problem of existing label correction methods.
Our proposed FasTEN accurately estimates the label transition matrix using a small clean dataset even if the samples are miscorrected.
Moreover, our FasTEN is highly efficient compared to existing methods since it requires single back-propagation through two-head architecture and needs only a single training stage.
Extensive experiments show that our method is the fastest and the most robust classifier.
Especially, our method achieves remarkable performance on both the real-world noise dataset (Clothing1M) and the synthetic dataset on various noise levels (CIFAR).
The detailed analysis shows that our method is robust to miscorrected labels by efficiently estimating the transition matrix shifted by the label correction.

% Although our proposed method performs better than baseline methods while preserving its efficiency by assuming that noisy labels are instance-independent, noisy labels in the real-world problem may be instance-dependent.
% Also, we believe that further performance improvement can be made via incorporating semi-supervised learning methods with various augmentation techniques.
% Therefore, we plan to study an efficient label correction method that considers both directions as future work.

\clearpage
% ---- Bibliography ----
%
% BibTeX users should specify bibliography style 'splncs04'.
% References will then be sorted and formatted in the correct style.
%
\bibliographystyle{splncs04}
\bibliography{egbib,custom}

\newpage
\clearpage
\appendix
\section{Theoretical Analysis}
\label{appendix:additional_theoretical_analysis}

In this section, we provide a theoretical analysis of our proposed FasTEN.
First, we formally establish the need for label correction (Appendix~\ref{appendix:theory_motivation}).
Then, we provide a formal background for a statistically consistent classifier (Appendix~\ref{appendix:theory_background}) and show the detailed calculation on the estimation of transition matrix $\mT$ through forward propagation (Appendix~\ref{appendix:theory_t}).
Finally, we prove that $\mT$ estimation error of our method are upper bounded (Appendix~\ref{appendix:theory_proof}).

\subsection{Motivation: Need for Label Correction}
\label{appendix:theory_motivation}

\paragraph{Reducing the noise level of a dataset is crucial}
in learning with noisy labels both empirically~\cite{drory2018neural,zheng2021meta,wu2020learning,hendrycks2018using_glc} and theoretically~\cite{chen2019understanding,charikar2017learning}. 
Following \cite{chen2019understanding,ren2018learning,jiang2018mentornet,ma2018dimensionality,han2018co}, 
the true transition matrix $\mT$ in \textit{symmetric noise} with noise level $\gamma \,(< 1)$ is defined as follows:
\begin{equation}
     T_{ij} = \begin{cases}
            1 - \left(\sfrac{(N-1)}{N}\right)\gamma,& \text{for }i = j.\\
            \sfrac{\gamma}{N} , & \text{otherwise.} \\
        \end{cases}   
\end{equation}
Correspondingly, the true transition matrix $\mT$ in \textit{asymmetric noise} with noise level $\gamma \,(< \sfrac{1}{2})$ is defined as follows:
\begin{equation}
     T_{ij} = \begin{cases}
            1 - \gamma,& \text{for }i = j.\\
            \gamma, & \text{for some } i \neq j. \\
            0, & \text{otherwise.}
        \end{cases}   
\end{equation}
We employ these noise schemes in our CIFAR-10 experiments.
Under the assumption that the label class is balanced,
\cite{chen2019understanding} prove that the upper bound of test accuracy for the symmetric and asymmetric noise is as follows: 
\begin{equation}\label{eq:noise_test_acc_upper_bound}
    \begin{cases}
            \left( \sfrac{(N-1)}{N} \right) \gamma^2 - 2\left(\sfrac{(N-1)}{N} \right) \gamma + 1,& \text{for } \textit{symmetric noise}.\\
            2\gamma^2 - 2\gamma + 1, & \text{for } \textit{asymmetric noise}.
        \end{cases}   
\end{equation}

Eq.~\ref{eq:noise_test_acc_upper_bound} shows quadratic (convex) functions of $\gamma$.
In the case of the symmetric noise, test accuracy is minimized at the $\gamma = 1$.
Similarly, with asymmetric noise, test accuracy is minimized at the $\gamma = \sfrac{1}{2}$.
Without additional assumptions, asymmetric noise of $\gamma > \sfrac{1}{2}$ cannot be learned as over half of the training data have wrong labels \cite{han2018co}.
Hence, the test accuracy always decreases as $\gamma$ increases in the feasible bound of $\gamma$.
Therefore, reducing the noise level of a dataset plays a vital role in increasing the achievable test accuracy.

\paragraph{How to Reduce the Noise Level of a Dataset}

Two approaches are commonly used to reduce the noise level of a dataset: re-weighting samples and label correction.
Sample re-weighting reduces the noise level by eliminating noisy samples during the model training, whereas label correction directly cleans up the dataset.
Recently, label correction methods have shown notable results compared to sample re-weighting methods.
\cite{song2019selfie,wu2020learning,zheng2021meta,mirzasoleiman2020coresets,chang2017active,lin2017focal,shrivastava2016training} claim that re-weighting might show sub-optimal performance by filtering out noisy samples, which might aid in training feature extractors.

\paragraph{Theoretical Inspired Explanation of the Superiority of Label Correction}
% Here, we provide a more theoretically inspired explanation of the above claim, why label correction performs better than sample re-weighting. 
Here, we provide a more theoretically motivated explanation of the above claim. 
We explore the reason behind the superior performance of label correction compared to sample re-weighting.
\cite{chen2019understanding} considers only the upper bound of test accuracy according to the noise level while ignoring the effect on the number of samples on a generalization error while \cite{charikar2017learning} does not consider deep networks.
% We implicitly demonstrate the superiority of label correction by presenting the generalization error considering the number of samples and the noise level simultaneously.
We aim to exhibit the superiority of label correction by presenting the generalization error considering both the number of samples and the noise level.
% It is clear that not only the noise level plays an essential role in the generalization error, but the number of training samples also plays an important role.
We argue that both the noise level and the number of training samples are critical in determining the generalization error.

For simplicity, our explanation assumes binary classification with asymmetric noise with a level $\gamma$.
We employ the VC dimension framework~\cite{vapnik1999overview,vapnik2013nature,chen2020robustness,zhang2016understanding} to describe the various methods for learning with noisy labels, although the framework provides a loose bound. 
Further investigation on a tighter bound using the Rademacher complexity or considering the multi-class classification is suggested for future research.
Under clean training data distribution $\mathcal{D}$ and clean true data distribution $\mathcal{D}^*$, the VC dimension framework presents the following bound.
\begin{equation}
    p \left( | \mathcal{E}_\mathcal{D} ( f ) - \mathcal{E}_{\mathcal{D}^*} ( f )  | > \epsilon \right) \le 4 \left( 2 | \mathcal{D} | \right)^{\mathbf{d}_{VC}} \exp \left( - \frac{1}{8} \epsilon^2 |\mathcal{D}|  \right),
\end{equation}
where $\mathbf{d}_{VC}$ is the VC dimension and $\mathcal{E}_{\mathcal{D}} (f)$ is the expectation of error for function $f$ regarding the data distribution $\mathcal{D}$.  
If the VC dimension $\mathbf{d}_{VC}$ is bounded (or finite), convergence is guaranteed because the upper bound decreases exponentially as the size of the dataset increases.
Now, we observe a noisy dataset $\Bar{\mathcal{D}}$ rather than a clean dataset $\mathcal{D}$. 
With the triangular inequality and the definition of $\gamma$, the following inequalities hold.
\begin{align}
&p \left( | \mathcal{E}_{\Bar{\mathcal{D}}} ( f ) - \mathcal{E}_{\mathcal{D}^*} ( f )  | > \epsilon \right) \\
=\ & p \left( | \mathcal{E}_{\Bar{\mathcal{D}}} ( f ) - \mathcal{E}_{\mathcal{D}} ( f )  + \mathcal{E}_\mathcal{D} ( f ) - \mathcal{E}_{\mathcal{D}^*} ( f )  |  > \epsilon \right) \\
\le\ & p \left( | \mathcal{E}_{\Bar{\mathcal{D}}} ( f ) - \mathcal{E}_{\mathcal{D}} ( f )  | > \epsilon \right) + p \left( | \mathcal{E}_\mathcal{D} ( f ) - \mathcal{E}_{\mathcal{D}^*} ( f )  |  > \epsilon \right) \\
    \le\  &\gamma +  4 \left( 2 | \mathcal{D} | \right)^{\mathbf{d}_{VC}} \exp \left( - \frac{1}{8} \epsilon^2 |\mathcal{D}|  \right)
\end{align}

% We know this theoretical explanation reveals loose bounds.
% Nevertheless, we believe that label correction, which preserves the total number of training data, shows better performance than re-weighting samples, in which the upper bound of the generalization error is likely to increase due to a drastic reduction of the number of training samples.
Even if this theoretical bound is loose, we argue that label correction shows better performance than re-weighting samples.
As the dataset gets noisier, the number of filtered out samples by the re-weighting methods will also increase, resulting in a drastic reduction of the number of training samples.
However, as aforementioned in Section~\ref{sec:1_introduction}, label correction holds an inherent problem of error propagation. We now explain the theoretical background on how the transition matrix acts as a safeguard for the label correction on our FasTEN.

\subsection{Background: Statistically Consistent Classification}
\label{appendix:theory_background}

It is well known that the label transition matrix $T$ can be used to train \textit{statistically consistent classifiers} in the presence of noisy labels~\cite{mirzasoleiman2020coresets,xia2019anchor,yao2020dual}.
A statistically consistent classifier is a classifier which guarantees the convergence to an optimal classifier when the number of data samples increases indefinitely.
% A \textit{statistically consistent empirical risk} can be similarly defined. 
% Moreover, if the risk is consistent, the classifier is always consistent. 
% Note that the inverse does not hold.
Following \cite{han2020sigua,xia2019anchor,yao2020dual,mirzasoleiman2020coresets}, we describe the consistency of empirical risk to yield the consistency of the classifier.

\paragraph{Statistically Consistent Empirical Risk}
Multi-class classification aims to train the hypothesis $\mathcal{H}$, which estimates a label $y$ given an input $x$.
Given the deep neural network $f_{\phi,\theta}$, a hypothesis $\mathcal{H}$ is commonly defined as follows:
\begin{equation}
    \mathcal{H}(x) = \arg \max_{n \in \{1 , ... ,N \} }  f_{\phi,\theta} (x) |_n .
\end{equation}
With the true sample distribution $\mathcal{D}^*$, the \textit{expected risk} $\mathcal{R}$ for $\mathcal{H}$ is defined as follows:
\begin{equation}
    \mathcal{R}(\mathcal{H}) = \min_{\mathcal{H}} \mathbb{E}_{(x, y) \sim \mathcal{D}^* } \left[ \mathcal{L}(\mathcal{H}(x), y) \right]
\end{equation}
Under the distribution $\mathcal{D}^*$ is unknown, the optimal hypothesis $\mathcal{H}$ should minimize $\mathcal{R}$. 
Since the risk of the optimal hypothesis is difficult to calculate, the empirical risk is usually used for approximation via training dataset $\mathcal{D}$. 
The definition of empirical risk is as follows:
\begin{equation}
\begin{split}
    \mathcal{R}_{|\mathcal{D}|}(\mathcal{H}) & = \mathbb{E}_{(x, y) \sim \mathcal{D} } \left[ \mathcal{L}(\mathcal{H}(x), y) \right]\\ & = \frac{1}{|\mathcal{D}|} \sum_{(x,y) \in \mathcal{D}} \mathcal{L}(\mathcal{H}(x), y) .
\end{split}
\end{equation}
Following equation holds for the \textit{statistically consistent empirical risk}:
\begin{equation}
    \mathcal{R}(\mathcal{H}) = \lim_{|\mathcal{D}| \rightarrow \infty} \mathcal{R}_{|\mathcal{D}|}(\mathcal{H}),
\end{equation}
where it is common to assume that $\mathcal{D}$ is sampled from $\mathcal{D}^*$ as independent and identically distributed (i.i.d) random variables~\cite{xia2019anchor,chen2019understanding,chen2020robustness,han2020sigua,cheng2020learning}.

\paragraph{Statistically Consistent Classifier}

Suppose an ideal zero-one loss function $\mathcal{L}^*$ (where it cannot be used in reality because its differentiation is impossible)~\cite{bartlett2006convexity}:
\begin{equation}
    \mathcal{L}^*(\mathcal{H}(x) = y) = \mathbf{1}_{\{\mathcal{H}(x) \neq y\}}.
\end{equation}
$\mathbf{1}_{\{\cdot\}}$ is an indicator function that outputs 1 if $\mathcal{H}(x) \neq y$ and 0 otherwise.
If the class of the hypothesis $\mathcal{H}$ is large enough~\cite{mohri2018foundations}, the optimal hypothesis to minimize the expected risk $\mathcal{R} (\mathcal{H})$ corresponds to the Bayes classifier~\cite{bartlett2006convexity} as follows:
\begin{equation}
    \mathcal{H}(x) = \arg \max_{n \in \{1, ..., N\}} p (y=n|x)
\end{equation}
Many classification loss functions in modern machine learning are proven to be \textit{classification-calibrated}~\cite{bartlett2006convexity,scott2012calibrated}, i.e., the classification-calibrated loss function leads to a similar prediction to that of $\mathcal{L}^*$ when $|\mathcal{D}|$ is sufficiently large~\cite{mohri2018foundations,vapnik2013nature}.
For example, the hinge loss is proven to be classification calibrated~\cite{yang2020consistency}, and the cross-entropy loss with softmax function is empirically classification-calibrated~\cite{guo2017calibration}.
The classifier $f_{\phi,\theta} (x)$ is said to be statistically consistent when the classifier converges to the probability $p(y|x)$ by minimizing the empirical risk $\mathcal{R}_{|\mathcal{D}|}(\mathcal{H})$.
Note that being risk consistent makes classifier consistent, but not vice versa~\cite{xia2019anchor}.

\paragraph{Statistically Consistent Classifier in Noisy Labels}

The empirical risk $\mathcal{R}_{|\Bar{\mathcal{D}}|}(\mathcal{H})$ of a noisy dataset $\Bar{\mathcal{D}}$ is as follows.
\begin{equation}
\begin{split}
    \mathcal{R}_{\Bar{|\mathcal{D}|}}(\mathcal{H}) 
    & = \mathbb{E}_{(x, y) \sim \Bar{\mathcal{D}} } \left[ \mathcal{L}(\mathcal{H}(x), y) \right] \\& = \frac{1}{|\Bar{\mathcal{D}}|} \sum_{(x,y) \in \Bar{\mathcal{D}}} \mathcal{L}(\mathcal{H}(x), y)
\end{split}
\end{equation}
Since the statistically consistent classifier $f_{\phi, \theta}(x)$ converges to $p(y|x)$, we can accept $f_{\phi, \theta}(x)$ to approximate $p(y|x)$.
Given the definition of the transition matrix $p(\Bar{y}|x) = T^\top p(y|x)$, a hypothesis with a noisy dataset $\Bar{\mathcal{H}}$ is defined as follows:
\begin{equation}
    \Bar{\mathcal{H}}(x) = \arg \max_{n \in \{1 , ... ,N \} }  T^\top f_{\phi,\theta} (x) |_n
\end{equation}
Hence, minimizing the following empirical risk $\mathcal{R}_{\Bar{|\mathcal{D}|}}(\Bar{\mathcal{H}})$  using only the noisy dataset $\Bar{\mathcal{D}}$ leads to a consistent classifier $f_{\phi,\theta} (x)$~\cite{xia2019anchor}.
\begin{equation}
\begin{split}
    \mathcal{R}_{\Bar{|\mathcal{D}|}}(\Bar{\mathcal{H}}) & = \mathbb{E}_{(x, y) \sim \Bar{\mathcal{D}} } \left[ \mathcal{L}(\Bar{\mathcal{H}}(x), y) \right]\\ & = \frac{1}{|\Bar{\mathcal{D}}|} \sum_{(x,y) \in \Bar{\mathcal{D}}} \mathcal{L}(\Bar{\mathcal{H}}(x), y)
\end{split}
\end{equation}
In other words, $f_{\phi,\theta}$ converges to the optimal classifier for the clean data when the sample size of the noisy dataset becomes infinitely large. 
Although other lines of research guarantee that maximizing accuracy in noisy data distribution maximizes accuracy in clean data distribution even without the transition matrix~\cite{chen2020robustness}, loss correction via the transition matrix is still an effective consistent classifier training scheme.
For this reason, a line of work in learning with noisy labels via the transition matrix attempts to train a statistically consistent classifier by an additional layer modeling the transition matrix preceded by the softmax layer~\cite{goldberger2016training,patrini2017making,thekumparampil2018robustness,yu2018learning,mnih2012learning,reed2014training,sukhbaatar2014training}. 
Incidentally, it is known that modifying the loss function using the transition matrix has a degree of handling instance-dependent label corruption~\cite{menon2016learning,hendrycks2018using_glc}.

\paragraph{Statistically Consistent Classifier in Noisy Labels with Small Clean Dataset}
We exploit a small number of clean data as in~\cite{finn2017model,veit2017learning,lee2018cleannet,jiang2018mentornet,ren2018learning,li2019learning,hendrycks2018using_glc,shu2019meta,bahri2020deep,zhang2020self,zheng2021meta,wu2020learning,wang2020training} while disjointing the clean $\mathcal{D}$ and noisy dataset $\Bar{\mathcal{D}}$.
It is trivial that a statistically consistent classifier in exploiting a clean set can be obtained by minimizing the following empirical risk:
\begin{equation}
\begin{split}
    \mathcal{R}_{|\mathcal{D}|}&(\mathcal{H}) +  \mathcal{R}_{\Bar{|\mathcal{D}|}}(\Bar{\mathcal{H}}) \\
    &= \mathbb{E}_{(x, y) \sim \mathcal{D} } \left[ \mathcal{L}(\mathcal{H}(x), y) \right]
    + \mathbb{E}_{(x, y) \sim \Bar{\mathcal{D}} } \left[ \mathcal{L}(\Bar{\mathcal{H}}(x), y) \right] \\
    &= \frac{1}{|\mathcal{D}|} \sum_{(x,y) \in \mathcal{D}} \mathcal{L}(\mathcal{H}(x), y) + \frac{1}{|\Bar{\mathcal{D}}|} \sum_{(x,y) \in \Bar{\mathcal{D}}} \mathcal{L}(\Bar{\mathcal{H}}(x), y).
\end{split}
\end{equation}

Since the cross-entropy loss surrogates the ideal zero-one loss function $\mathcal{L}^*$~\cite{guo2017calibration}, minimizing the empirical risk $\mathcal{R}_{|\mathcal{D}| + |\Bar{\mathcal{D}}|}(\mathcal{H}, \Bar{\mathcal{H}})$ is equivalent to following optimization problem.
\begin{equation}
    \arg \min_{\phi, \theta} \sum_{(x,y)\in \mathcal{D}} \mathcal{L} \left(f_{\phi,\theta} (x), y \right) + \sum_{(x, \Bar{y}) \in \Bar{\mathcal{D}}} \mathcal{L} \left(   \widehat{\mT}^\top f_{\phi,\theta} (x), \Bar{y} \right).
\end{equation}
Without loss of generalization, the optimization problem can be rewritten by introducing an episodic batch formation in Section~\ref{subsec:3_1_episodic_batch_formation}:
\begin{equation}
    \arg \min_{\phi, \theta} \sum_{(x,y)\in d} \mathcal{L} \left(f_{\phi,\theta} (x), y \right) + \sum_{(x, \Bar{y}) \in \Bar{d}} \mathcal{L} \left(   \widehat{\mT}^\top f_{\phi,\theta} (x), \Bar{y} \right).
\end{equation}

\subsection{Calculation of the Estimated Transition Matrix $\widehat{T}$ of FasTEN}
\label{appendix:theory_t}

GLC~\cite{hendrycks2018using_glc} presents a method to estimate the transition matrix through a small clean dataset similar to our FasTEN.
GLC adopts the slow calculation method via a \texttt{FOR} or \texttt{WHILE} loop since \cite{hendrycks2018using_glc} only requires to obtain the transition matrix once in the entire training process. 
% However, our L2TL needs to estimate the transition matrix for every iteration. 
However, our FasTEN needs to estimate the transition matrix for every iteration as we correct the labels on the fly, ending up altering the ideal transition matrix.
% Thus, estimation could progress fast with single forward propagation (just using matrix operation) without hesitating \texttt{FOR} or \texttt{WHILE} loop.
We speed up the estimation with a single forward propagation by using only matrix operations, avoiding the sluggish \texttt{FOR} or \texttt{WHILE} loop.
Here, we show the derivation of Eq.~\ref{eq:estimation_t}. 
Let $d_i = \{ (x, y) \in d | y_i = 1 \} $. Then, 
\begingroup
\allowdisplaybreaks
\begin{align}
    \widehat{T}_{ij} &= p(\Bar{y}=j|y=i) \\
    &= \frac{1}{|d_i|} \sum_{(x, y) \in d_i } p (\Bar{y} = j | y = i, x) \\
    &= \frac{1}{|d_i|} \left.\sum_{(x, y) \in d_i } f_{\phi, \theta} (x) \right\rvert_j \\
    &= \frac{1}{|d_i|} \left( \left.\sum_{(x, y) \in d_i } y f_{\Bar{\phi}, \Bar{\theta}} (x)^\top \right\rvert_{(i, j)} \right)  \\
    &= \left( \left.\sum_{(x, y) \in d_i } y f_{\Bar{\phi}, \Bar{\theta}} (x)^\top \right\rvert_{(i, j)} \right) \frac{1}{|d_i|} \\
    &= \left( \left.\sum_{(x, y) \in d_i } y f_{\Bar{\phi}, \Bar{\theta}} (x)^\top \right\rvert_{(i, j)} \right) \left( \left. \text{diag}{}^{-1} \left( \sum_{(x, y) \in d} y \right) \right\rvert_{(i,j)} \right)
\end{align}
\endgroup

Without loss of generality, $\widehat{\mT}$ can be written as follows:
\begin{equation}
    \widehat{\mT} = \left( \sum_{(x,y)\in d} y f_{\Bar{\phi}, \Bar{\theta}} (x)^\top \right) \text{diag}{}^{-1} \left( \sum_{(x, y) \in d} y \right).
\end{equation}

\subsection{Proof of Theorem~\ref{theorem:transition_matrix_error}}
\label{appendix:theory_proof}

In this section, we prove under strong assumptions~(Theorem~\ref{theorem:transition_matrix_error_strong}) followed by milder assumptions~(Theorem~\ref{theorem:transition_matrix_error}).
Theorem~\ref{theorem:transition_matrix_error_strong} estimates the upper bound of the error on transition matrix $T$, assuming the ideal situation where $p (\Bar{y} | x)$ is perfectly parameterized to $f_{\Bar{\phi}, \Bar{\theta}} (x)$.

\begin{theorem}
\label{theorem:transition_matrix_error_strong}
Assuming $p (\Bar{y} | x) = f_{\Bar{\phi}, \Bar{\theta}} (x)$, for $\epsilon \ge 0$, 
\begin{equation}
    p \left( \left| \widehat{T}_{ij} - T_{ij} \right| > \epsilon \right) \le 2 \exp \left( -2 \epsilon^2 K  \right).
\end{equation}
\end{theorem}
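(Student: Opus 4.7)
The plan is to reduce the statement to a direct application of Hoeffding's inequality on the per-class empirical average implicit in $\widehat{\mT}$. First I would recall the closed-form expression for $\widehat{T}_{ij}$ derived in Appendix~\ref{appendix:theory_t}: for the clean batch $d$, if $d_i = \{(x,y) \in d : y_i = 1\}$, then
\begin{equation*}
\widehat{T}_{ij} \;=\; \frac{1}{|d_i|} \sum_{(x,y) \in d_i} f_{\Bar{\phi}, \Bar{\theta}}(x)\big|_j .
\end{equation*}
By the episodic batch formation of Section~\ref{subsec:3_1_episodic_batch_formation}, we have $|d_i| = K$ for every $i$, so $\widehat{T}_{ij}$ is the sample mean of exactly $K$ i.i.d.\ terms.

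Next I would identify the mean of the summand with $T_{ij}$. Under the theorem's assumption $p(\Bar{y}\mid x) = f_{\Bar{\phi}, \Bar{\theta}}(x)$, the summand equals $p(\Bar{y}=j\mid x)$. Since each $x \in d_i$ is drawn i.i.d.\ from $p(x \mid y = i)$, the conditional independence assumption stated in the main text, $p(\Bar{y}\mid y) = \int p(\Bar{y}\mid x)\, p(x\mid y)\, dx$, yields
\begin{equation*}
\mathbb{E}\!\left[ f_{\Bar{\phi}, \Bar{\theta}}(x)\big|_j \,\middle|\, y = i \right] \;=\; \int p(\Bar{y}=j \mid x)\, p(x \mid y=i)\, dx \;=\; T_{ij}.
\end{equation*}

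Finally I would invoke Hoeffding's inequality. Each term $f_{\Bar{\phi}, \Bar{\theta}}(x)\big|_j$ is the $j$-th coordinate of a probability vector, hence almost-surely lies in $[0,1]$. The $K$ summands are i.i.d.\ with common mean $T_{ij}$, so Hoeffding gives
\begin{equation*}
p\!\left( \left| \widehat{T}_{ij} - T_{ij} \right| > \epsilon \right) \;\le\; 2 \exp\!\left( -\frac{2\epsilon^2 K^2}{K \cdot (1-0)^2} \right) \;=\; 2 \exp\!\left( -2 \epsilon^2 K \right),
\end{equation*}
which is the claimed bound. The argument is essentially routine; the main conceptual step is confirming that the assumption $p(\Bar{y}\mid x) = f_{\Bar{\phi}, \Bar{\theta}}(x)$ combined with the conditional independence assumption is precisely what makes the summand unbiased for $T_{ij}$. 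There is no real obstacle here, but the strength of the assumption $p(\Bar{y}\mid x) = f_{\Bar{\phi}, \Bar{\theta}}(x)$ is the only hard part, and it is exactly what the follow-up Theorem~\ref{theorem:transition_matrix_error} will relax by adding a Rademacher-complexity term bounding the approximation error $\|p(\Bar{y}\mid x) - f_{\Bar{\phi}, \Bar{\theta}}(x)\|$ over the hypothesis class of $H$-layer networks with bounded Frobenius norms.
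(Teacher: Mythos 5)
Your proof is correct and follows essentially the same route as the paper's: both reduce the claim to Hoeffding's inequality applied to the $K$-term per-class empirical average, with the assumption $p(\Bar{y}\mid x)=f_{\Bar{\phi},\Bar{\theta}}(x)$ guaranteeing that the summands are unbiased for $T_{ij}$. If anything, your version is slightly cleaner, since you verify $\mathbb{E}[\widehat{T}_{ij}]=T_{ij}$ directly and apply Hoeffding once, whereas the paper first splits $p(|\widehat{T}_{ij}-T_{ij}|>\epsilon)$ into a bias term and a deviation term via a probability-level triangle inequality (a step that is only valid here because the bias term is exactly zero under the assumption).
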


\begin{proof} 
If $p (\Bar{y} | x) = f_{\Bar{\phi}, \Bar{\theta}} (x)$, then $p \left( \left| \mathbb{E} \left[ \widehat{\mT} \right] - \mT \right| > \epsilon \right) = 0$.
With the triangular and Hoeffding inequality, the following holds:
\begingroup
\allowdisplaybreaks
\begin{align}
    & p \left( \left| \widehat{T}_{ij} - T_{ij} \right| > \epsilon \right) \\
    &= p \left( \left| \widehat{T}_{ij} - \mathbb{E} \left[ \widehat{T}_{ij} \right] + \mathbb{E} \left[ \widehat{T}_{ij} \right] - T_{ij} \right| > \epsilon \right)  \\
    &\le p \left( \left| \mathbb{E} \left[ \widehat{T}_{ij} \right] - T_{ij} \right| > \epsilon \right)  + p \left( \left| \widehat{T}_{ij} - \mathbb{E} \left[ \widehat{T}_{ij} \right] \right| > \epsilon \right)  \\ 
    &= p \left( \left| \mathbb{E} \left[ \widehat{T}_{ij} \right] - T_{ij} \right| > \epsilon \right) +  2 \exp \left( -2 \epsilon^2 K  \right) \\
    &= 2 \exp \left( -2 \epsilon^2 K  \right).
\end{align}
\endgroup
\end{proof}

With Theorem~\ref{theorem:transition_matrix_error_strong} alone, we can see that the estimation error of transition matrix $\mT$ decreases exponentially as $K$ (the number of samples per class) increases, as we mentioned in Theorem~\ref{theorem:transition_matrix_error} of Section~\ref{subsec:3_2_transition_matrix_estimation}.

We assume the hypothetical case that $p (\Bar{y} | x)$ could flawlessly model $f_{\Bar{\phi}, \Bar{\theta}} (x)$, but the assumption does not hold in practice.
Several lemmas are established in order to prove Theorem 1 under more relaxed assumptions.
If $p (\Bar{y} | x) \neq f_{\Bar{\phi}, \Bar{\theta}} (x)$, then $p ( | \mathbb{E} [ \widehat{\mT} ] - \mT | > \epsilon ) \neq 0$. We focus on examining the upper bound of $p ( | \mathbb{E} [ \widehat{\mT} ] - \mT | > \epsilon )$ under the relaxed assumption.
The upper bound of $\widehat{\mT}$ (which is equivalent to $f_{\Bar{\phi}, \Bar{\theta}} (x)$) is strictly $1$ since it is a probability. 
By applying McDiarmid's concentration inequality~\cite{boucheron2013concentration}, the following inequality is established:
\begin{equation}
\begin{split}
    & p \left( \left| \mathbb{E} \left[ \widehat{T}_{ij} \right] - T_{ij} \right| > \epsilon \right) \\
    & \le \mathbb{E}_\sigma \left[ \sup_{\mathcal{H}} \frac{1}{|\mathcal{\Bar{D}}|} \sum_{(x,\Bar{y}) \in \Bar{\mathcal{D}}} \sigma_x \mathcal{L}( \mathcal{H} (x) ,  \Bar{y}) \right]
    + \sqrt{\frac{\log ( 1 / \epsilon )}{2|\Bar{\mathcal{D}}|}}
\end{split}
\end{equation}
where $\sigma$ is an i.i.d Rademacher random variable~\cite{montgomery1990distribution} and $\mathcal{H}$ is a hypothesis.

We estimate the upper bound of the estimation error of $\mT$ by assuming $\mathcal{H}$ is constructed using deep neural networks.
A deep neural networks hypothesis $\mathcal{H}'$ is defined as follows.
\begin{equation}
\mathcal{H}'(x) = \Bar{\theta} \mathcal{A}_{H-1} ( \Bar{\phi}_{H-1} \mathcal{A}_{H-2} (...\ \mathcal{A}_1 (\Bar{\phi}_1 x))) \in \mathbb{R}^N
\end{equation}
where $H$ is the depth of deep neural networks and $\mathcal{A}_i$ is the $i$-th activation function. 
When the function class is limited with deep neural networks, the following lemma holds by borrowing the results of \cite{xia2019anchor}.

\begin{lemma}\label{lemma:rademacher}
Suppose $\sigma$ is an i.i.d Rademacher random variable and $\mathcal{L}$ is the cross-entropy loss function which is $L$-Lipschitz continuous with respect to $\mathcal{H}'$,
\begin{equation}
\begin{split}
\mathbb{E}_\sigma \left[ \sup_{\mathcal{H}} \frac{1}{|\mathcal{\Bar{D}}|} \sum_{(x,\Bar{y}) \in \Bar{\mathcal{D}}} \sigma_x \mathcal{L}( \mathcal{H} (x) ,  \Bar{y}) \right] 
     \le NL \mathbb{E}_\sigma \left[ \sup_{\mathcal{H}'} \frac{1}{|\mathcal{\Bar{D}}|} \sum_{(x,\Bar{y}) \in \Bar{\mathcal{D}}} \sigma_x \mathcal{H}' (x) \right]
\end{split}
\end{equation}
where $\mathcal{H}'$ is a hypothesis belonging to the function class of deep neural networks.
\end{lemma}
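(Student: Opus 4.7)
My plan is to peel the Lipschitz loss $\mathcal{L}$ out of the Rademacher supremum via a contraction-type inequality, so that only the bare network class $\mathcal{H}'$ remains on the right. Because $\mathcal{L}$ is $L$-Lipschitz in its first argument (w.r.t.\ the raw network output $\mathcal{H}'$), the contraction should detach the loss at a cost of a factor $L$; the $N$ factor in the target bound must then come from bookkeeping over the $N$-dimensional output of $\mathcal{H}'$.

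First I would reinterpret $\mathcal{L}(\mathcal{H}(x),\Bar{y})$ on the left-hand side as $\mathcal{L}(\mathcal{H}'(x),\Bar{y})$: the $L$-Lipschitz hypothesis is stated with respect to $\mathcal{H}'$, so the notation inside $\mathcal{L}$ should be read as the raw logits (which is consistent with how cross-entropy is evaluated in practice, via the softmax composition). Second, I would exploit the additive coordinate structure of cross-entropy: $\mathcal{L}(g(x),\Bar{y}) = \sum_{n=1}^{N} \Bar{y}_n\,(-\log g_n(x))$ splits into $N$ scalar-input summands, each $L$-Lipschitz in the single coordinate $g_n(x)$. Third, I would apply the scalar Ledoux--Talagrand contraction inequality to each of the $N$ coordinate-wise Rademacher processes, paying one factor of $L$ per application, and then aggregate the $N$ resulting pieces via the triangle inequality into the single joint process $\mathbb{E}_\sigma\bigl[\sup_{\mathcal{H}'}\frac{1}{|\Bar{\mathcal{D}}|}\sum_{x} \sigma_x \mathcal{H}'(x)\bigr]$ that appears on the right-hand side of the lemma.

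The main obstacle I anticipate is the vector-to-scalar reduction. A direct scalar contraction requires a scalar input, whereas $\mathcal{L}$ sees an $N$-dimensional vector; the coordinatewise path above needs careful tracking that each class coordinate contributes an independent copy of the per-coordinate Rademacher process, and that their sum can be upper bounded uniformly by $N$ copies of the full vector process stated in the lemma (e.g., by noting $|\Bar{y}_n| \le 1$ and absorbing the per-coordinate sup into the joint sup). A one-shot alternative is Maurer's vector-contraction inequality, which peels off the vector-Lipschitz loss at the cost of an extra $\sqrt{2}$ and a vector-valued Rademacher process that must then be related to the scalar form; but the additive coordinatewise route on cross-entropy keeps the $L$ constant sharp and produces the $NL$ prefactor cleanly. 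Either way, once the loss is detached and the $N$ per-coordinate bounds are aggregated, the inequality in the lemma follows.
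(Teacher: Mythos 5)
You should first know that the paper does not actually prove this lemma: it is imported wholesale from the cited reference \cite{xia2019anchor}, where it follows from a vector-valued contraction argument. At the level of strategy your plan --- peel the Lipschitz loss off the Rademacher average by contraction and collect an $NL$ prefactor from the $N$ output coordinates of the network --- is exactly the route taken in that source, so your overall framing is on target.

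The genuine gap is in the coordinatewise decomposition you prefer over Maurer's inequality. Writing $\mathcal{L}(g(x),\Bar{y})=\sum_{n=1}^{N}\Bar{y}_n\left(-\log g_n(x)\right)$ does not yield $N$ summands that are each $L$-Lipschitz in a \emph{single scalar} coordinate of $\mathcal{H}'(x)$. If $g$ denotes the softmax probabilities, then each $-\log g_n(x)$ depends on \emph{all} $N$ logits through the normalizer, so the scalar Ledoux--Talagrand contraction cannot be applied coordinate by coordinate against the class of $n$-th-coordinate logit functions; if instead you contract against the probability outputs $g_n$ themselves, the map $t\mapsto-\log t$ is not Lipschitz on $(0,1]$, so the hypothesis of the contraction lemma fails. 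Moreover, since $\Bar{y}$ is one-hot, only one summand in your decomposition is ever nonzero, so this route does not even naturally generate the factor $N$: in the lemma that factor comes from bookkeeping over the $N$ coordinates of the network output inside the vector contraction, not from a sum over label coordinates. The repair is precisely the alternative you mention and then set aside: treat $\mathcal{L}(\cdot,\Bar{y})$ as $L$-Lipschitz in the full logit vector and invoke Maurer's vector-contraction inequality, or the classical multi-class reduction $\mathfrak{R}(\mathcal{L}\circ\mathcal{H})\le L\sum_{n=1}^{N}\mathfrak{R}(\Pi_{n}\mathcal{H}')\le NL\max_{n}\mathfrak{R}(\Pi_{n}\mathcal{H}')$, where $\Pi_{n}\mathcal{H}'$ is the class of $n$-th coordinate functions; reading the right-hand side of the lemma as this coordinatewise Rademacher complexity of $\mathcal{H}'$ gives the stated bound, which is also the quantity that Lemma~\ref{lemma:deep_rademacher} subsequently controls.
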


As opposed to using the VC dimension framework in Section 1, this section uses the Rademacher complexity framework~\cite{bartlett2002rademacher}  to assess the upper bounds of our method.
Hypothesis complexity of deep neural networks via Rademacher complexity is broadly studied in \cite{xia2019anchor,bartlett2017spectrally,golowich2018size,neyshabur2017pac}.
In particular, \cite{golowich2018size} proves the following lemma:

\begin{lemma}\label{lemma:deep_rademacher}
Assume the Frobenius norm of the weight matrices $\Bar{\phi}_1, ..., \Bar{\phi}_{H-1}, \Bar{\theta}$ are at most $\Bar{\Phi}_1, ..., \Bar{\Phi}_{H-1}, \Bar{\Theta}$ for $H$-layer neural networks $f_{\Bar{\phi}, \Bar{\theta}}$.
Let the activation functions be 1-Lipschitz, positive-homogeneous, and applied element-wise (such as the ReLU). 
Let $\sigma$ is an i.i.d Rademacher random variable.
Let $x$ is upper bounded by B, i.e., for any $x \in \mathcal{X}$, $\|x\| \le B$. Then, for $\epsilon \ge 0$
\begin{equation}
\begin{split}
    \mathbb{E}_\sigma \left[ \sup_{\mathcal{H}'} \frac{1}{|\mathcal{\Bar{D}}|} \sum_{(x,\Bar{y}) \in \Bar{\mathcal{D}}} \sigma_x \mathcal{H}' (x) \right]
    \le \frac{B(\sqrt{ 2H \log 2} + 1) \Bar{\Theta} \Pi_{h=1}^{H-1} \Bar{\Phi}_i }{\sqrt{ | \Bar{\mathcal{D} } |  }}.
\end{split}
\end{equation}
\end{lemma}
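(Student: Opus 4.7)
The plan is to decompose the estimation error via a triangle inequality through the conditional mean,
$$|\widehat T_{ij} - T_{ij}| \le |\widehat T_{ij} - \mathbb{E}[\widehat T_{ij}]| + |\mathbb{E}[\widehat T_{ij}] - T_{ij}|,$$
where the expectation is taken over the $K$ clean per-class samples in $d_i$ with the trained noisy classifier $f_{\bar{\phi},\bar{\theta}}$ held fixed. The first summand is a conditional concentration term driven by the clean batch size $K$ and yields the $2\exp(-2\epsilon^2 K)$ contribution; the second summand is a bias term quantifying how well $f_{\bar{\phi},\bar{\theta}}$, trained on $\bar{\mathcal{D}}$, approximates the true noisy posterior $p(\bar y\mid x)$, and yields the two $|\bar{\mathcal{D}}|$-dependent terms. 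A union bound across the triangle decomposition then assembles the stated three-term upper bound.

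For the first summand, $\widehat T_{ij}$ is by construction an empirical mean of $K$ i.i.d.\ samples of $f_{\bar{\phi},\bar{\theta}}(x)|_j \in [0,1]$ over $x\in d_i$, so Hoeffding's inequality immediately produces the $2\exp(-2\epsilon^2 K)$ tail, exactly as in Theorem~\ref{theorem:transition_matrix_error_strong}. For the second summand, rewrite $\mathbb{E}[\widehat T_{ij}] - T_{ij}$ as the generalization gap of the $L$-Lipschitz cross-entropy loss restricted to the $(i,j)$-th coordinate evaluated under the noisy-data distribution. McDiarmid's bounded-differences inequality applies because each noisy training sample alters the empirical loss by at most $O(1/|\bar{\mathcal{D}}|)$; it yields, with probability at least $1-\epsilon$ over $\bar{\mathcal{D}}$, a deviation bounded by the expected supremum of the empirical process (the Rademacher complexity of the loss class) plus $\sqrt{-\log(\epsilon)/(2|\bar{\mathcal{D}}|)}$, producing the second summand of the theorem.

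It remains to control the Rademacher complexity of the loss class, which I would carry out in two nested reductions. Lemma~\ref{lemma:rademacher} (a Talagrand-style vector-valued contraction) reduces the loss-class complexity to that of the hypothesis class $\mathcal{H}'$, absorbing the Lipschitz constant of the loss and the $N$ output coordinates into a multiplicative factor $NL$: applying the scalar contraction lemma coordinate-wise and summing over the softmax outputs is the cleanest route. Lemma~\ref{lemma:deep_rademacher} (Golowich et al.'s peeling bound) then controls the depth-$H$ network complexity itself: positive-homogeneity of the activations permits the outer weight-matrix Frobenius bound $\bar{\Theta}$ to be extracted from the Rademacher supremum; bounding the log-moment generating function of the resulting supremum via Jensen's inequality, and iterating the peel for each of the remaining $H-1$ inner layers, accumulates the product $\prod_h \bar{\Phi}_h$, the $\sqrt{2H\log 2}+1$ depth factor, and the input-norm $B$ over $\sqrt{|\bar{\mathcal{D}}|}$. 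Substituting this bound into Lemma~\ref{lemma:rademacher} delivers the first summand of the theorem.

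The main obstacle is the peeling induction itself: its tightness rests on a delicate log-moment bound that uses positive-homogeneity to pull $\bar{\Phi}_h$ outside a Rademacher expectation, combined with Jensen's inequality applied to $\log\mathbb{E}\exp(\cdot)$ and a sub-Gaussian-style tail for the remaining Rademacher variables, and any slack in these steps directly degrades the dependence on $H$. A secondary subtlety is that two independent sources of randomness coexist in the theorem — the $K$ clean samples drawn to form $\widehat T$ and the $|\bar{\mathcal{D}}|$ noisy training samples used to fit $f_{\bar{\phi},\bar{\theta}}$. I would make the conditioning explicit, applying Hoeffding conditionally on the trained classifier and McDiarmid marginally over $\bar{\mathcal{D}}$, so that the three tail terms assemble cleanly under a single union bound without double-counting.
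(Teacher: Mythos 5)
The paper does not actually prove this lemma: it imports the bound verbatim from \cite{golowich2018size}, offering only the citation. Your sketch of that reference's peeling argument --- using positive-homogeneity to extract the Frobenius norm of each layer, controlling the log-moment generating function via Jensen's inequality, and iterating over the $H-1$ inner layers to accumulate $\Bar{\Theta}\prod_h \Bar{\Phi}_h$ together with the $\sqrt{2H\log 2}+1$ depth factor --- is an accurate account of how the cited result is established, so your route coincides with the source the paper relies on. (Note that the bulk of your proposal reconstructs the surrounding Theorem~\ref{theorem:transition_matrix_error} rather than this lemma specifically; that part also matches the paper's Appendix~\ref{appendix:theory_proof} step for step.)
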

Now, we can complete the proof of Theorem~\ref{theorem:transition_matrix_error}.

% \newtheorem*{remark}{Theorem 1}
% \begin{remark}
% Assume the Frobenius norm of the weight matrices $\Bar{\phi}_1, ..., \Bar{\phi}_{H-1}, \Bar{\theta}$ are at most $\Bar{\Phi}_1, ..., \Bar{\Phi}_{H-1}, \Bar{\Theta}$ for $H$-layer neural networks $f_{\Bar{\phi}, \Bar{\theta}}$.
% Let the loss function be $L$-Lipschitz continuous w.r.t. $f_{\Bar{\phi}, \Bar{\theta}}$.
% Let the activation functions be 1-Lipschitz, positive-homogeneous, and applied element-wise (such as the ReLU). Let $x$ is upper bounded by B, i.e., for any $x \in \mathcal{X}$, $\|x\| \le B$. Then, for $\epsilon \ge 0$
% \begin{equation}
% \begin{split}
%     p \left( \left| \widehat{\emT}_{ij} - \emT_{ij} \right| > \epsilon \right) \le \frac{NLB(\sqrt{ 2H \log 2} + 1) \Bar{\Theta} \Pi_{h=1}^{H-1} \Bar{\Phi}_i}{\sqrt{ | \Bar{\mathcal{D}} |  }} \\
%     + \frac{\sqrt{ -\log ( \epsilon )} }{\sqrt{2 | \Bar{\mathcal{D}} |  }} + 2 \exp \left( -2 \epsilon^2 K  \right).
%     % &p \left( \left| \widehat{T}_{ij} - T_{ij} \right| > \epsilon \right) \\
%     % &\le \frac{NLB(\sqrt{ 2H \log 2} + 1) \Bar{\Theta} \Pi_{h=1}^{H-1} \Bar{\Phi}_i + \sqrt{ \sfrac{1}{2} \log ( \sfrac{1}{\epsilon} )} }{\sqrt{ | \Bar{\mathcal{D} } |  }}   \\
%     % &\;\;\;\;+ 2 \exp \left( -2 \epsilon^2 K  \right).
% \end{split}
% \end{equation}
% \end{remark}
\begin{proof} With the triangular inequality, Hoeffding inequality, Theorem~\ref{theorem:transition_matrix_error_strong}, Lemma~\ref{lemma:rademacher}, and~\ref{lemma:deep_rademacher}, the following holds.
\begingroup
\allowdisplaybreaks
\begin{align}
    &p \left( \left| \widehat{T}_{ij} - T_{ij} \right| > \epsilon \right)  \\
    &= p \left( \left| \widehat{T}_{ij} - \mathbb{E} \left[ \widehat{T}_{ij} \right] + \mathbb{E} \left[ \widehat{T}_{ij} \right] - T_{ij} \right| > \epsilon \right)  \\
    &\le p \left( \left| \mathbb{E} \left[ \widehat{T}_{ij} \right] - T_{ij} \right| > \epsilon \right)  + p \left( \left| \widehat{T}_{ij} - \mathbb{E} \left[ \widehat{T}_{ij} \right] \right| > \epsilon \right)  \\ 
    &= p \left( \left| \mathbb{E} \left[ \widehat{T}_{ij} \right] - T_{ij} \right| > \epsilon \right) +  2 \exp \left( -2 \epsilon^2 K  \right) \\
    &\le \mathbb{E}_\sigma \left[ \sup_{\mathcal{H}} \frac{1}{|\mathcal{\Bar{D}}|} \sum_{(x,\Bar{y}) \in \Bar{\mathcal{D}}} \sigma_x \mathcal{L}( \mathcal{H} (x) ,  \Bar{y}) \right] + \sqrt{\frac{\log ( \sfrac{1}{\epsilon} )}{2|\Bar{\mathcal{D}}|}} +  2 \exp \left( -2 \epsilon^2 K  \right) \\
    &\le NL \mathbb{E}_\sigma \left[ \sup_{\mathcal{H}'} \frac{1}{|\mathcal{\Bar{D}}|} \sum_{(x,\Bar{y}) \in \Bar{\mathcal{D}}} \sigma_x \mathcal{H}' (x) \right] + \sqrt{\frac{\log ( \sfrac{1}{\epsilon} )}{2|\Bar{\mathcal{D}}|}} +  2 \exp \left( -2 \epsilon^2 K  \right) \\
    &\le \frac{NLB(\sqrt{ 2H \log 2} + 1) \Bar{\Theta} \Pi_{h=1}^{H-1} \Bar{\Phi}_i }{\sqrt{ | \Bar{\mathcal{D} } |  }} + \sqrt{\frac{\log ( \sfrac{1}{\epsilon} )}{2|\Bar{\mathcal{D}}|}}  + 2 \exp \left( -2 \epsilon^2 K  \right)
\end{align}
\endgroup
\end{proof}

Theorem~\ref{theorem:transition_matrix_error} and~\ref{theorem:transition_matrix_error_strong} state that the estimation error of transition matrix $\mT$ is reduced with a larger $K$.
However, we experimentally verify that $K=1$ is enough for achieving comparable performance (See Appendix~\ref{subsubsec:4_4_2_how_sensitive_is_to_the_batch_size}). 

\begin{table*}[h]
\caption{
Data split composition of dataset used in our experiments.}
\label{table:desc_dataset}
% \vspace{-0.2cm}
\centering
% \resizebox{0.85\textwidth}{!}{%
\begin{tabular}{c|c|c|c|c|c|c}
% \hline
\toprule
Dataset & Noisy-train & Clean-train & Valid & Test & Image size & $\#$ of classes \\
\midrule
% \multicolumn{7}{c}{Human annotated dataset} \\
% \hline
% CIFAR-10 & 44K & 1K & 5K & 10K & 32 $\times$ 32 & 10 \\
% CIFAR-100 & 44K & 1K & 5K & 10K & 32 $\times$ 32 & 100 \\
CIFAR-10 & \multirow{2}{*}{44K} & \multirow{2}{*}{1K} & \multirow{2}{*}{5K} & \multirow{2}{*}{10K} & \multirow{2}{*}{32 $\times$ 32} & 10 \\
CIFAR-100 &  & & &  &  & 100 \\
\midrule
% \multicolumn{7}{c}{Real world noisy dataset} \\
% \hline
Clothing1M & 1M & 47K & 14K & 10K & 224 $\times$ 224 & 14 \\
\bottomrule
\end{tabular}%
% }
\end{table*}
We end this section by enumerating the limitations of our theoretical analysis.
(i) Although our method is based on a multi-head architecture, a clean and a noisy classifier are trained simultaneously, where only the training of the noisy classifier is considered in the theoretical analysis.
% (ii) We could not endorse the tightness of the upper bound of Theorem~\ref{theorem:transition_matrix_error} and \ref{theorem:transition_matrix_error_strong}.
(ii) We failed to make the upper bound tight for Theorem~\ref{theorem:transition_matrix_error} and \ref{theorem:transition_matrix_error_strong}.
Additional assumptions like \cite{charikar2017learning} may yield tighter bound.
We conjecture that our method works empirically well for $K=1$ since the upper bound is loose.
(iii) The data distribution on a noisy classifier changes in every iteration due to simultaneously corrected labels.
However, we assume that the data distribution is stationary in the proof.
In order to make our theoretical assumption more adequate for our method, it is necessary to examine the situation under changing data distribution.

\section{Experimental Details}
\label{appendix:experimental_details}

\subsection{Datasets}
As shown in Table~\ref{table:desc_dataset}, we use bigger noisy dataset (noisy-train) and smaller clean dataset (clean-train) for training. Validation set is used for obtaining the best model. Since CIFAR datasets do not have the validation set, we split 10\% of the entire training set as the validation set. Thus, experimental results may differ from the results of their papers.
For Clothing1M, we use its original data split.

\subsection{Mini-batch Construction}
We sample the mini-batches from both clean and noisy datasets.
For the clean dataset, we construct the mini-batch to have the same number of instances per class for clean samplers, whereas the mini-batch of the noisy set is randomly sampled.
We choose the batch size 100 for both CIFAR-10 and CIFAR-100, a total of 200 images used per iteration.
As the number of classes is 10 and 100, 10 and 1 image(s) are used per class for the clean batch, respectively.
We choose the batch size 42 for each noisy and clean set on Clothing1M dataset with 14 classes so that 84 images are used per iteration, where 3 samples are used per class for the clean batch.

\subsection{Detailed Training Procedure}

\subsubsection{CIFAR-10/100 dataset}
Here, we describe the detailed training procedure of our baselines on CIFAR-10/100 dataset~\cite{krizhevsky2009learning}.
For all baselines except Deep kNN, we use SGD optimizer with an initial learning rate of 1e-1.
For Deep kNN~\cite{bahri2020deep}, we use Adam optimizer~\cite{kingma2014adam} and set the initial learning rate to 1e-3. 
We follow the experimental settings described in each corresponding papers as much as possible to obtain performance fairly.

\textbf{L2RW:} When training the model, we decay the learning rate to 1e-2 and 1e-3 at 40 and 60 epochs, with a total of 80 epochs.
\textbf{MW-Net:} For the total of 60 epochs, we decay the learning rate by a factor of 10 at 40 and 50 epochs, respectively.
\textbf{Deep kNN:} Since it has multiple training stages, we first train two independent models with only the clean dataset $\mathcal{D}$ and sum of the clean and noisy dataset $\mathcal{D} \cup \Bar{\mathcal{D}}$, respectively.
Then, we filter out the suspicious samples from the noisy set to generate the filtered noisy set $\Bar{\mathcal{D}}_{\text{filter}}$ using $k$-nearest neighbors algorithm ($k$-NN) of the logit outputs from one of the two trained models, where we choose the model with the better validation set accuracy.
Finally, we train the model with the sum of the clean and filtered noisy set $\mathcal{D} \cup \Bar{\mathcal{D}}_{\text{filter}}$.
For each phase, we train the model until 100 epochs without learning rate decay.
\textbf{GLC:} We first train the model with only the noisy set $\Bar{\mathcal{D}}$ and obtain the label transition matrix with the trained model.
With the label transition matrix, we train the initialized model with the clean and noisy set $\mathcal{D} \cup \Bar{\mathcal{D}}$ while correcting the loss obtained from the noisy samples.
\textbf{MLoC:} We first train the model with a warm-up of 75 epochs, i.e., directly training with the noisy label dataset without bells and whistles.
We then meta-train the model with the learning rate of 1e-4 for additional 75 epochs.
\textbf{MLaC:} For a total of 120 epochs, we decay the learning rate at 80 and 100 epochs by a factor of 10.
\textbf{MSLC:} Similar to MLoC, we first train the model with the warm-up of 80 epochs, then meta-train the model with the learning rate of 1e-2 and cut it to 1e-3 at 20 epochs, for 40 epochs. \textbf{FasTEN (Ours):} we train the model until 70 epochs and decay the learning rate at 50 and 60 epochs by a factor of 10.

\subsubsection{Clothing1M dataset}
For the Clothing1M dataset~\cite{xiao2015learning}, we borrow the baseline evaluation results from each corresponding paper except for L2RW, where we train the model ourselves as the original paper does not report the results.
For a fair comparison, we use the same backbone network, ImageNet-pretrained ResNet-50.
\textbf{L2RW:} We train the model for 10 epochs using the SGD optimizer with the initial learning rate of 1e-2.
We decay the learning rate after 5 epochs by a factor of 10, where we follow the common training procedure borrowed from \cite{wu2020learning,shu2019meta}.
\textbf{FasTEN (Ours):} Similarly, we use the SGD optimizer with the same initial learning rate, where we decay the learning rate after 1 epochs for total of 2 epochs.

\subsection{Evaluation Details for Section~\ref{subsec:4_3_relabel_performance}}
\label{appendix:exp_relabel}
Considering the situation where we have to purify the existing noisy labels inside the training set automatically, predicting the correct labels of the train samples is crucial.
We compare the accuracy on the noisy train dataset where we compare with the clean label, which is unknown to the model at training time.
We show the accuracy on CIFAR-10 with symmetric noise of 80\%; hence if the model is perfectly overfitted to the noisy set, it will yield 28\% accuracy.

For each method, we use the model with the best validation accuracy, i.e., the best model that each has produced.
For the meta-model of MLaC, we use the output of the label correction network (LCN), where the network is fed with the feature vector and the noisy label for every noisy dataset to yield a corrected soft label.
The feature vector is extracted from the main model, where it is obtained using the features before the fully connected layer.
For the meta-model of MSLC, we use the cached soft label from the last epoch.
MSLC calculates the soft label with the linear combination of the previous cached soft label, the predicted label from the main model, and the given noisy label, where the weights are continuously learned during training.
\section{Additional Experiments}
\label{appendix:additional_experiments}

\subsection{Experiments on CIFAR-N with Real-world Noise}
\begin{table}[h]
    \caption{Test accuracy (\%) comparison on CIFAR-N dataset with real-world label noise.}
    \label{tab:appendix_cifarn_eval}
    \centering
    \small
    \begin{tabular}{c|cc}
    \toprule
        Methods & CIFAR-10N & CIFAR-100N \\
        \midrule
        L2RW [79] & 83.15 & 57.54 \\
        MW-Net [84] & 83.82 & 61.12 \\
        Deep kNN [3] & 82.87 & 52.71 \\
        \midrule
        GLC [40] & 82.71 & 54.72 \\
        MLoC [100] & 84.45 & 61.49  \\
        \midrule
        MLaC [119] & 83.64 & 45.95 \\
        MSLC [103] & 84.35 & 63.51 \\
        \midrule
        FasTEN (Ours) & \textbf{87.01} & \textbf{63.53} \\
    \bottomrule
    \end{tabular}

\end{table}

We conducted further experiments on a recently released dataset, CIFAR-N \cite{wei2022learning}, which contains real-world noise from human annotators.
CIFAR-N is constructed by relabeling the existing CIFAR dataset using the Amazon Mechanical Turk, a crowdsourcing platform, to show the instance-dependent noise from the human annotators.
From the original dataset, we extracted 1K clean samples from the original training samples to construct noisy and clean training subsets.
We use the same training settings of CIFAR-10/100 experiments in Section \ref{subsec:4_1_predictive_performance_comparison} that does not use semi-supervised methods or sophisticated augmentation techniques to evaluate the methods fairly.
Table \ref{tab:appendix_cifarn_eval} shows that our proposed FasTEN achieves better performance than the baselines, similar to the results on Clothing1M.
These additional results demonstrate that FasTEN is more robust against real-world noise.

\subsection{Effect of Label Correction}
\begin{figure*}[t]
% \begin{wrapfigure}{R}{0.35\textwidth}
\begin{minipage}{0.49\textwidth}
% \vspace{-3.5em}
% \begin{figure}[t] %%% t: top, b: bottom, h: here
\centering
\includegraphics[width=\linewidth]{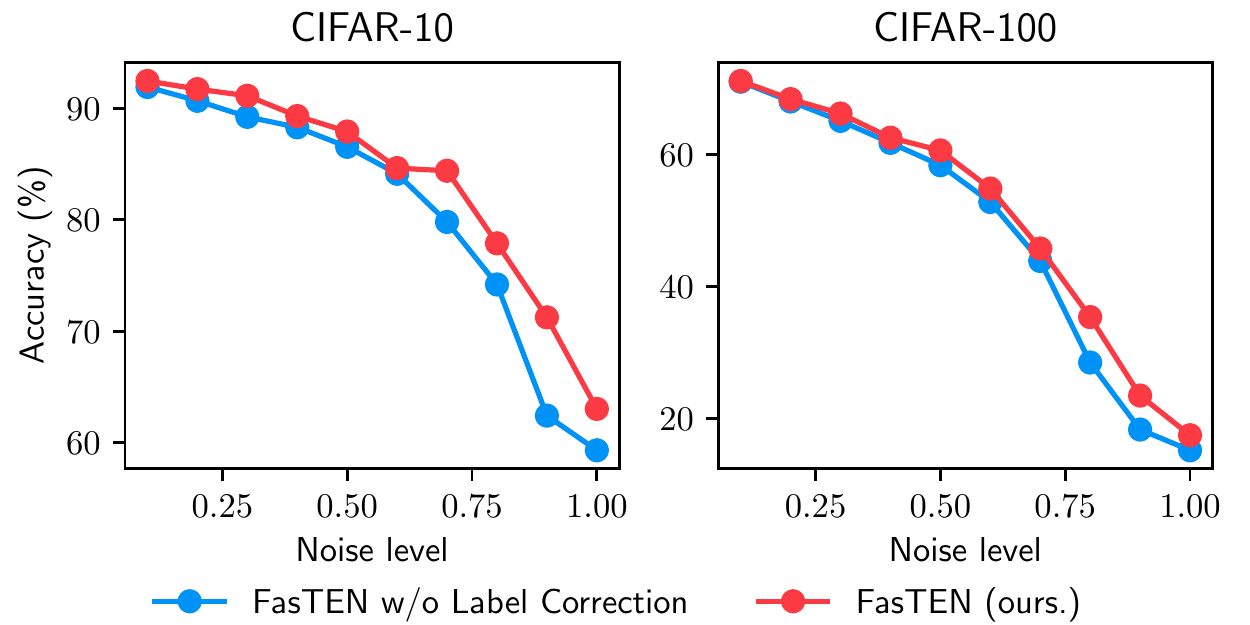}

\caption{
Effect of label correction on CIFAR-10/100 with various symmetric noise ratio. Accuracy (\%) of FasTEN (ours.) and FasTEN without label correction is provided.
}
% \vspace{-3em}
\label{fig:2_label_correction}
% \end{figure}
\end{minipage}
\hfill
\begin{minipage}{0.49\textwidth}
% \vspace{-3.5em}
% \begin{figure}[t] %%% t: top, b: bottom, h: here

\centering
\includegraphics[width=1.0\linewidth]{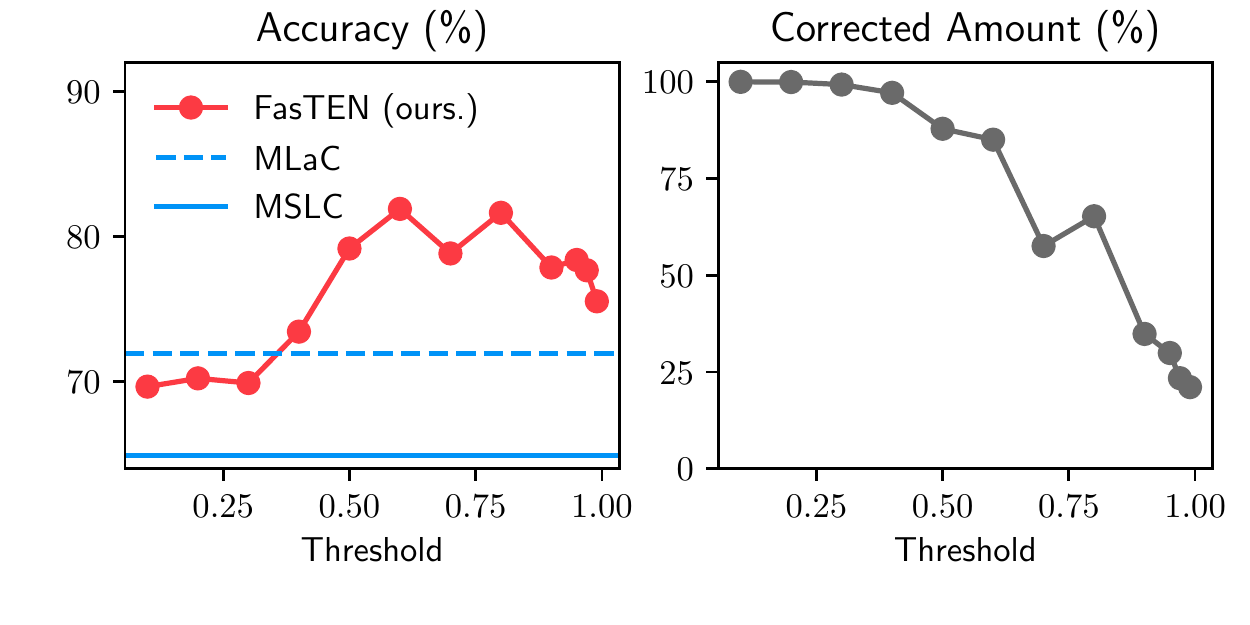}

% \vspace{-0.2cm}
\caption{
Robustness to Miscorrection of FasTEN. 
The corrected label amount (\%) and model accuracy (\%) according to the label correction threshold ($\rho$) are provided.
% Effect of label correction on CIFAR-10/100 with various symmetric noise ratio.
% Accuracy (\%) of \textbf{FasTEN (ours.)} and FasTEN without label correction is provided.
}
\label{fig:3_relabel_amount}
% \end{figure}
\end{minipage}
% \end{wrapfigure}
\end{figure*}

Figure~\ref{fig:2_label_correction} shows the effect of label correction on our method.
When FasTEN does not correct samples at the end of each epoch, it degrades the predictive performance.
Furthermore, as the noise gets severe, performance further degrades where we expect the effect of label correction to be larger~\cite{chen2019understanding}.
From these observations, we verify that label correction also contributes for improving performance.

\subsection{Robustness to Miscorrected Labels}
Figure~\ref{fig:3_relabel_amount} demonstrates the robustness of our method when unreliable samples are also corrected by lowering the threshold of label correction to investigate how safe our method is.
We observe the robustness of our method compared to other label correction methods, MLaC~\cite{zheng2021meta} and MSLC~\cite{wu2020learning}.
We observe how the performance of our model changes when labels are corrected more unreliably as we lower the threshold $\rho$.
The experiments are conducted on CIFAR-10 with the most severe noise level (symmetric 80\%).
As shown in Figure~\ref{fig:3_relabel_amount}, we verify that our method is robust for miscorrected samples even if all samples in the noisy dataset are corrected when the threshold is under $0.5$.
Our method uses the transition matrix to avoid the error propagation problem even if unreliable samples are corrected.

\subsection{Is the Performance Improvement Due to Over-sampling on the Clean Dataset?}
\label{appendix:exp_oversampling}

Unlike many MAML-based methods using the clean dataset as gradient guidance in the meta training step, our proposed method utilizes the dataset directly during the model training.
One may suspect that the performance improvement of our method may come from over-sampling the clean dataset.
Therefore, we compare our proposed model with an over-sampling method~\cite{chawla2002smote,hong2020disentangling}.
To see the effectiveness of our batch formation, we experiment with the standard cross-entropy loss (Eq.~\ref{eq:CE}) instead of our final objective (Eq.~\ref{eq:final_loss}), using the same batch formation (Naïve Oversampling). For a fair comparison, label correction is excluded. 
\begin{equation}\label{eq:CE}
\arg \min_{\phi, \theta}
\sum_{(x,y)\in d} \mathcal{L} \left(f_{\phi,\theta} (x), y \right) + \sum_{(x, \Bar{y}) \in \Bar{d}}  \mathcal{L} \left( f_{\phi,\theta} (x), \Bar{y} \right)     
\end{equation}

Table~\ref{table:appendix_oversampling} shows that our proposed method outperforms the over-sampling method.
This observation indicates that our meta-learning method appropriately leverages the clean dataset to estimate the label corruption matrix.

\begin{table*}[h]
\caption{
The effect of clean set oversampling on the performance of CIFAR-10/100 experiments.
The accuracy (\%) of naïve oversampling and FasTEN (ours.) w/o label correction is provided.
}
\label{table:appendix_oversampling}
% \vspace{-0.2cm}
\centering
\resizebox{1.0\textwidth}{!}{%
\begin{tabular}{c|c|cccc|cc}
% \hline
\toprule
\multirow{2}{*}{Dataset} & \multirow{2}{*}{Method} & \multicolumn{4}{c|}{Symmetric} & \multicolumn{2}{c}{Asymmetric} \\ 
&  & 20 \% & 40 \% & 60 \% & 80 \% & 20 \% & 40 \% \\
\midrule
\multirow{2}{*}{CIFAR-10} & Naïve Oversampling & 
89.39& 85.90& 83.90& 56.83& 90.58& 84.41
\\
% & FasTEN w/o LC & 91.00 & 88.01 & 83.14 & 71.25 & 91.82 & 90.77
% \\
& \textbf{FasTEN (ours.)} w/o Label Correction & \textbf{90.67} & \textbf{88.29} & \textbf{84.12} & \textbf{74.19} & \textbf{92.50} & \textbf{91.05} \\
\hline
\multirow{2}{*}{CIFAR-100} & Naïve Oversampling &
65.42& 57.08& 42.18& 25.88& 67.71& 61.73
\\
% & FasTEN w/o LC & 68.02 & 61.75 & 52.79 & 28.46 & \textbf{69.59} & 66.07
% \\
& \textbf{FasTEN (ours.)} w/o Label Correction & \textbf{68.02} & \textbf{61.75} & \textbf{52.79} & \textbf{28.46} & \textbf{69.59} & \textbf{66.07} \\
\hline
\bottomrule
\end{tabular}%
}
\end{table*}
\begin{table*}[h]
\caption{
The effect of two-head architecture via oracle label transition matrix on CIFAR-10/100 dataset. 
Test accuracy (\%) of GLC~\cite{hendrycks2018using_glc} and FasTEN (ours.) with and without oracle label transition matrix is provided.
For a fair comparison, label corruption is excluded in FasTEN.}
\label{table:appendix_glc}
% \vspace{-0.2cm}
\centering
\resizebox{1.0\textwidth}{!}{%
\begin{tabular}{cc|cccc|cc}
% \hline
\toprule
\multirow{2}{*}{Dataset} & \multirow{2}{*}{Method} & \multicolumn{4}{c|}{Symmetric} & \multicolumn{2}{c}{Asymmetric} \\ 
& & 20 \% & 40 \% & 60 \% & 80 \% & 20 \% & 40 \% \\
\midrule
\multirow{4}{*}{CIFAR-10} & GLC~\cite{hendrycks2018using_glc} w/ Oracle & 
89.06 & 85.45 & 81.56 & 67.54 & 91.74 & 90.35
\\
& \textbf{FasTEN (ours.)} w/ Oracle w/o Label Correction & \textbf{91.37} & \textbf{88.71} & \textbf{83.97} & \textbf{74.91} & \textbf{91.80} & \textbf{91.10} \\
\cmidrule{2-8}
& GLC~\cite{hendrycks2018using_glc} & 
89.66 & 85.30 & 80.34 & 67.44 & 91.56 & 89.76 \\
& \textbf{FasTEN (ours.)} w/o Label Correction & \textbf{90.67} & \textbf{88.29} & \textbf{84.12} & \textbf{74.19} & \textbf{92.50} & \textbf{91.05} \\
\hline
\multirow{2}{*}{CIFAR-100} & GLC~\cite{hendrycks2018using_glc} & 
60.99 &	49.00 &	33.38 &	20.38 &	64.43 &	54.20 \\
& \textbf{FasTEN (ours.)} w/o Label Correction & \textbf{68.02} & \textbf{61.75} & \textbf{52.79} & \textbf{28.46} & \textbf{69.59} & \textbf{66.07} \\
\hline
\bottomrule
\end{tabular}%
}
\end{table*}

\subsection{Comparison to Other Methods with the Transition Matrix}

Although the transition matrix is initially introduced as a safeguard to mitigate the risk of label correction in the FasTEN, our FasTEN even shows better performance than other methods employing the transition matrix.
This section illustrates that FasTEN, even without label correction, shows better performance than other methods using transition matrix with the clean dataset:
GLC (Section~\ref{appendix:exp_glc}) and MLoC (Section~\ref{appendix:exp_mloc}).

\subsubsection{Comparison to Gold Loss Correction (GLC)~\cite{hendrycks2018using_glc}}
\label{appendix:exp_glc}
Our proposed method is similar to GLC in estimating the label transition matrix, but it shows better performance than GLC even without label correction (See Table~\ref{table:appendix_glc}).
Additionally, instead of estimating the transition matrix, we directly use the oracle matrix to examine the effectiveness of the multi-head architecture more clearly.
Even using the same oracle matrix for both methods, our FasTEN outperforms GLC.
We conjecture that our multi-head architecture trains the model to extract features better than the two-stage training of GLC, which learns noisy classifier and clean classifier consecutively.

\subsubsection{Comparison to Meta Loss Correction (MLoC)~\cite{wang2020training}}
\label{appendix:exp_mloc}

Since our method does not directly parameterize the label transition matrix $\mT$, stable estimation of $\mT$ and its theoretical analysis are possible (See Theorem~\ref{theorem:transition_matrix_error}).
Table~\ref{table:appendix_mloc} shows that MLoC and our FasTEN without Label Correction (LC) shows comparable performance.
MLoC uses several engineering techniques for stable training: a strong prior and gradient clipping, where it is not mentioned in the paper.
However, our method shows good performance even without label correction, being robust to different hyperparameters, reducing the need for excessive engineering.
We also emphasize that there is a significant gap in performance at a severe noise level.

\begin{table*}[h]
\caption{Test accuracy (\%) comparison between FasTEN without Label Correction and MLoC~\cite{wang2020training} on CIFAR-10/100 dataset.}
\label{table:appendix_mloc}
% \vspace{-0.2cm}
\centering
\resizebox{0.95\textwidth}{!}{%
\begin{tabular}{cc|cccc|cc}
% \hline
\toprule
\multirow{2}{*}{Dataset} & \multirow{2}{*}{Method} & \multicolumn{4}{c|}{Symmetric} & \multicolumn{2}{c}{Asymmetric} \\ 
& & 20 \% & 40 \% & 60 \% & 80 \% & 20 \% & 40 \% \\
\midrule
\multirow{2}{*}{CIFAR-10} & MLoC~\cite{wang2020training} & 
90.50 & 87.20 & 81.95 & 54.64 & 91.15 & 89.35
\\
& \textbf{FasTEN (ours.)} w/o Label Correction & \textbf{91.37} & \textbf{88.71} & \textbf{83.97} & \textbf{74.91} & \textbf{91.80} & \textbf{91.10} \\
\hline
\multirow{2}{*}{CIFAR-100} & MLoC~\cite{wang2020training} & 
\textbf{68.16} &	\textbf{62.09} &	\textbf{54.49} &	20.23 &	69.20 &	\textbf{66.48} \\
& \textbf{FasTEN (ours.)} w/o Label Correction & 68.02 & 61.75 & 52.79 & \textbf{28.46} & \textbf{69.59} & 66.07 \\
\hline
\bottomrule
\end{tabular}%
}
\end{table*}
\begin{table*}[h]
\footnotesize
\caption{
Incorrect label detection performance comparison on CIFAR-10 with symmetric 80\% noise.
The Area Under the Receiver Operating Characteristic (AUROC) and The Area Under the Precision-Recall Curve (AUPRC) are provided.
Note that pure random model will yield 0.5 AUROC and 0.72 AUPRC.
$\dagger$ denotes the performance of the sample weights obtained with the meta model.
}
\label{table:appendix_detection}
% \vspace{-0.2cm}
\centering
\resizebox{\textwidth}{!}{%
\begin{tabular}{c|cccccccccc|c}
% \hline
\toprule
 & L2RW & L2RW${}^\dagger$ & MW-NET & Deep kNN & Deep kNN${}^\dagger$ & GLC & MLoC & MLaC & MLaC${}^\dagger$ & MSLC & \textbf{FasTEN} \\ 
%  & {\small\cite{ren2018learning}} &  {\small\cite{ren2018learning}} & {\small\cite{shu2019meta}} & {\small\cite{bahri2020deep}} & {\small\cite{bahri2020deep}} & {\small\cite{hendrycks2018using_glc}} & {\small\cite{wang2020training}} & {\small\cite{zheng2021meta}} & {\small\cite{zheng2021meta}} & {\small\cite{wu2020learning}} & {\small\textbf{(ours.)}} \\
\midrule
AUROC &
0.8653	& 0.4898 & 0.9205	& 0.9019	& 0.8070 & 0.9324	& 0.9318	& 0.9640 & 0.9564 &	0.9303 & \textbf{0.9651} \\
AUPRC &  
0.9412 & 0.7994 & 0.9631 & 0.9396 & 0.9326 & 0.9674 & 0.9695 & \textbf{0.9835} &  0.9791 & 0.9624 & \textbf{0.9835}\\
\hline
\bottomrule
\end{tabular}%
}
\end{table*}

% Correct & 
% 0.446 & 0.6022 & 0.6442 & 0.6893 & 0.6271 & 0.7563 & \textbf{0.8415} & 0.6781 & 0.2784 & 0.7811 \\

% \subsubsection{Empirical Convergence Analysis in $T$ Estimation}
% \label{appendix:empirical_convergence_t}
% This section analyzes the convergence of estimation error between the oracle transition matrix $\mT$ and the estimated transition matrix $\widehat{\mT}$, comparing our LT2L to other methods, MLoC and GLC, which learn the transition matrix.
% Label correction is excluded for our LT2L as it may produce unfair comparisons.
% Figure~\ref{fig:convergence} shows the difference between the probability distribution of the oracle transition matrix and the estimated transition matrix for each iteration, where Pearson $\chi^2$-divergence is used to measure the discrepancy between the two matrices.
% Since GLC estimates the transition matrix only once in the entire learning process, it represents a fixed value unrelated to iteration. 
% The plot for the MLoC error seems to be constant, but it is, in fact, slowly decreasing.
% It implies that MLoC is likely to be highly dependent on the initialization of the transition matrix $\widehat{\mT}$.
% Although our LT2L does not require pre-training and uses only clean samples inside a single mini-batch, it shows fast convergence with a similar estimation error to GLC.

% \input{Figures/6_Convergence}

\subsection{Incorrect Label Detection Performance Comparison}
\label{appendix:exp_detection}
We consider the case where we have to continuously purify the already-collected dataset with the existence of a human oracle, where the process can be accelerated by correctly detecting the candidates for the wrongly labeled samples.
Hence, we regard the incorrect label detection problem as a binary classification problem where the model output probability of the noisy samples is used as the barometer for the correctness of the label.

\paragraph{Settings.} 
We extract the probability values of the noisy labels per sample inside the noisy training set to be the negative score for the binary classification problem where we label $1$ for the wrongly labeled sample and $0$ otherwise.
We additionally measure the performance of the meta-models.
We use the meta-learned sample weights for MSLC, MW-Net, and L2RW.
For MLaC, we use the probability of the soft label obtained by the meta-model, which is described in detail in Appendix \ref{appendix:exp_relabel}.
Finally, as Deep kNN filters out the doubtful samples while training the final model, we regard the process as weighting each sample by 0 or 1 depending on its doubtfulness.
Note that we evaluate each method using all the training samples, including the correctly labeled, as each method may mistake those samples to be wrongly labeled.

\paragraph{Results.}
\cite{ren2018learning,shu2019meta,bahri2020deep} claim that using meta-learning or pre-training is able to tell whether a sample is mislabeled.
Although our model is not directly aimed at finding noisy samples in the noisy set, Table~\ref{table:appendix_detection} shows that our model achieves comparable or better performance in detecting noisy labels than the baselines.
Although \cite{ren2018learning,bahri2020deep} claim that the performance has improved because the meta-model detects noisy samples through re-weighting, the actual performance of meta-models is generally lower than that of the final classifier.
It implies that \cite{ren2018learning,bahri2020deep} may operate with different dynamics than the original author intended.

\subsection{Analysis on Our Method}\label{subsec:4_4_analysis_on_our_method}
\subsubsection{How Many Clean Samples are Required?}\label{subsubsec:4_4_1_how_many_clean_sample_are_required}
To verify the effect of the clean dataset size, we observe the performance differences while varying the size.
As shown in Table~\ref{tab:4_analysis_clean}, our method consistently shows better performance on different sizes. 
Especially, even when our method only uses 100 clean samples, it outperforms all the baselines which utilize all the clean samples (1,000 samples).
This observation demonstrates that our method could accurately estimate the label transition matrix with a small number of clean samples, which can be applicable to real-world scenarios where it is difficult to obtain a sufficient number of clean samples.

\subsubsection{How Sensitive is to the Batch Size?}
\label{subsubsec:4_4_2_how_sensitive_is_to_the_batch_size}
\begin{table}[t]
\caption{The effect of the number of clean set on CIFAR-10 with symmetric 80\% noise. Comparison with other label correction methods with meta-learning is provided.}
\label{tab:4_analysis_clean}
\centering
% \resizebox{\linewidth}{!}{
\begin{tabular}{c|cc|c}
\toprule
$\#$ of &\multirow{2}{*}{MLaC} & \multirow{2}{*}{MSLC} & \textbf{FasTEN} \\ 
 clean examples &  &  & {\textbf{(ours.)}} \\
\midrule
100  &	32.92	& 69.00	&   \textbf{76.48} \\
250  &	42.15	& 63.52 &	\textbf{79.36} \\
500	 &  50.70	& 63.35 &	\textbf{77.82} \\
1000 &	71.94	& 64.90 &	\textbf{77.88} \\ 
\hline
\bottomrule
\end{tabular}
% }
\end{table}
\begin{table}[t]
\caption{The effect of the number of samples per class ($K$) in the mini-batch on the predictive performance (Accuracy (\%)) of CIFAR-10 experiments.}
\label{tab:5_analysis_batch}
\centering
% \resizebox{\linewidth}{!}{%
\begin{tabular}{c|cccc|cc}
\toprule
\multirow{2}{*}{$K$} & \multicolumn{4}{c|}{Symmetric} & \multicolumn{2}{c}{Asymmetric} \\
 & 20 \% & 40 \% & 60 \% & 80 \% & 20 \% & 40 \% \\
\midrule
2	& 91.85	& 89.96	& 87.00	& 81.68	& 92.43	& 91.11  \\
4	& 91.79	& 89.85	& 86.92	& 82.18	& 92.14	& 91.18 \\
6	& 92.16	& 90.07	& 86.77	& 79.57	& 92.14	& 90.98 \\
8	& 92.10	& 89.82	& 84.81	& 79.55	& 92.41	& 90.74 \\
10	& 91.72	& 89.30	& 84.63	& 77.88	& 91.95	& 90.25 \\
\hline
\bottomrule
\end{tabular}%
% }
\end{table}

\begin{table*}[t]
\caption{
Evaluation results varying the hyperparameter $\lambda$. Test accuracy (\%) with 95\% confidence interval of 5-runs is provided.}
\label{tab:12_optimal_hyperparameter}
\small
\centering
% \resizebox{0.95\textwidth}{!}{%
\begin{tabular}{c|c|cccc|cc}
\toprule
\multirow{2}{*}{} & \multirow{2}{*}{$\lambda$} & \multicolumn{4}{c|}{Symmetric Noise Level} & \multicolumn{2}{c}{Asymmetric Noise Level} \\ 
& & 20\% & 40\% & 60\% & 80\% & 20\% & 40\% \\
\midrule
\multirow{6}{*}{\STAB{\rotatebox[origin=c]{90}{CIFAR-10}}}&0.01 & 90.87 \footnotesize $\pm$ 0.35 & 89.03 {\footnotesize $\pm$ 0.24} & 85.20 {\footnotesize $\pm$ 1.00} & 75.95 {\footnotesize $\pm$ 1.13} & 91.06 {\footnotesize $\pm$ 0.36} & 89.40 {\footnotesize $\pm$ 0.44} \\
&0.05 & 91.69 {\footnotesize $\pm$ 0.17}  & 89.20 {\footnotesize $\pm$ 0.64}  & 84.48 {\footnotesize $\pm$ 0.89}  & 76.39 {\footnotesize $\pm$ 1.79}  & 91.92 {\footnotesize $\pm$ 0.31}  & 89.58 {\footnotesize $\pm$ 0.31}  \\
&0.1 & 91.72 {\footnotesize $\pm$ 0.20} & 89.30 {\footnotesize $\pm$ 0.32} & 84.63 {\footnotesize $\pm$ 0.70} & 77.88 {\footnotesize $\pm$ 1.09} & 91.95 {\footnotesize $\pm$ 0.22}  & 90.25 {\footnotesize $\pm$ 0.39} \\
&0.2 & 91.72 {\footnotesize $\pm$ 0.11}  & 89.61 {\footnotesize $\pm$ 0.29} & 85.71 {\footnotesize $\pm$ 0.24} & 77.55 {\footnotesize $\pm$ 2.78} & 92.20 {\footnotesize $\pm$ 0.19} & 90.51 {\footnotesize $\pm$ 0.27} \\
&0.5 & 91.94 {\footnotesize $\pm$ 0.28} & 90.07 {\footnotesize $\pm$ 0.17} & 86.78 {\footnotesize $\pm$ 0.31} & 79.52 {\footnotesize $\pm$ 0.78} & 92.29 {\footnotesize $\pm$ 0.10} & 90.43 {\footnotesize $\pm$ 0.31} \\
&1.0 & 91.80 {\footnotesize $\pm$ 0.20}  & 89.70 {\footnotesize $\pm$ 0.19} & 86.66 {\footnotesize $\pm$ 0.48} & 80.95 {\footnotesize $\pm$ 0.44} & 92.04 {\footnotesize $\pm$ 0.40} & 90.54 {\footnotesize $\pm$ 0.23} \\\midrule
\multirow{6}{*}{\STAB{\rotatebox[origin=c]{90}{CIFAR-100}}}& 0.01 & 65.36 {\footnotesize $\pm$ 0.77} & 57.79 {\footnotesize $\pm$ 1.12} & 43.65 {\footnotesize $\pm$ 1.06} & 26.95 {\footnotesize $\pm$ 0.76} & 66.58 {\footnotesize $\pm$ 0.71} & 62.19 {\footnotesize $\pm$ 0.66} \\
& 0.05 & 68.49 {\footnotesize $\pm$ 0.27} & 62.47 {\footnotesize $\pm$ 0.32} & 53.55 {\footnotesize $\pm$ 0.86} & 35.53 {\footnotesize $\pm$ 1.28} & 69.73 {\footnotesize $\pm$ 0.18} & 65.63 {\footnotesize $\pm$ 0.79} \\
& 0.1 & 68.38 {\footnotesize $\pm$ 0.29} & 62.53 {\footnotesize $\pm$ 0.33} & 54.82 {\footnotesize $\pm$ 0.46} & 35.35 {\footnotesize $\pm$ 1.13} & 69.35 {\footnotesize $\pm$ 0.13} & 66.34 {\footnotesize $\pm$ 0.27} \\
& 0.2 & 68.65 {\footnotesize $\pm$ 0.09} & 63.07 {\footnotesize $\pm$ 0.22} & 54.84 {\footnotesize $\pm$ 0.30} & 35.65 {\footnotesize $\pm$ 0.66} & 70.37 {\footnotesize $\pm$ 0.15} & 66.93 {\footnotesize $\pm$ 0.20} \\
& 0.5 & 68.75 {\footnotesize $\pm$ 0.60} & 63.82 {\footnotesize $\pm$ 0.33} & 55.22 {\footnotesize $\pm$ 0.64} & 37.36 {\footnotesize $\pm$ 1.15} & 70.35 {\footnotesize $\pm$ 0.51} & 67.93 {\footnotesize $\pm$ 0.53} \\
& 1.0 & 67.91 {\footnotesize $\pm$ 0.59} & 62.78 {\footnotesize $\pm$ 0.28} & 52.76 {\footnotesize $\pm$ 1.15} & 31.45 {\footnotesize $\pm$ 0.75} & 70.02 {\footnotesize $\pm$ 0.60} & 67.11 {\footnotesize $\pm$ 0.55} \\\bottomrule
\end{tabular}%
% }
\end{table*}
In Section~\ref{subsec:3_2_transition_matrix_estimation}, we show that the accuracy of estimating the label transition matrix is upper-bounded by the number of samples in the mini-batch.
As previous studies~\cite{hendrycks2018using_glc} mentioned, the quality of the estimated transition matrix affects the performance in learning with noisy labels.
To verify the effect of the number of samples in the mini-batch, we observe the performance changes by varying the number of samples per class in the mini-batch from 1 to 10.
As shown in Table~\ref{tab:5_analysis_batch}, there is little change in performance depending on the number of samples per class, although the performance degradation is predicted by Theorem~\ref{theorem:transition_matrix_error} when the number of samples is small.
From this observation, we believe that our proposed method shows practicality even in situations where the batch size cannot be increased due to the limited computing resources.

\subsubsection{Searching the Optimal Hyperparameter $\lambda$}
\label{appendix:searching_the_optimal_hyperparameter}
We observe performance variance on the CIFAR-10/-100 datasets when we change the hyperparameter $\lambda$ which is a loss balancing factor. 
The results are summarized in Table~\ref{tab:12_optimal_hyperparameter}.
The hyperparameter is searched in \{0.01, 0.05, 0.1, 0.2, 0.5, 1.0\}.

\section{Additional Related Work}
\label{appendix:additional_related_works}

\subsection{Comparison with Other Methods with Label Transition Matrix}

Under the assumption that label corruption occurs class-dependently and instance-independently, learning with noisy label methods exploiting the label transition matrix has shown admirable performance~\cite{mnih2012learning,reed2014training,sukhbaatar2014training,bekker2016training,patrini2017making,goldberger2016training}.
It is well known that training a statistically consistent classifier is possible if the transition matrix is estimated accurately, but precise estimation is usually challenging~\cite{mirzasoleiman2020coresets,xia2019anchor,yao2020dual}.
Various methods have been proposed to alleviate the issue: imposing strong prior~\cite{patrini2017making,han2018masking}, designing a loss function using the ratio of the label transition matrix $\mT$~\cite{xia2019anchor}, or factorization of the transition matrix~\cite{yao2020dual}.
However, it is still challenging to estimate the transition matrix with only the noisy dataset.
Recently, approaches that improve the estimation accuracy of the transition matrix using a small clean dataset have shown remarkable results: Gold Loss Correction (GLC)~\cite{hendrycks2018using_glc} and Meta Loss Correction (MLoC)~\cite{wang2020training}.
The clean dataset makes it possible to directly estimate the noisy label posterior, resulting in stable prediction of the transition matrix.
Our FasTEN differs from the existing methods which try to find the fixed label transition matrix, as the oracle transition matrix continuously changes during label correction in our method.
Our method shows novelty compared to the previous two methods even without the label correction.

\paragraph{Differences from Gold Loss Correction (GLC)~\cite{hendrycks2018using_glc}}

Similar to FasTEN, GLC models $p (\Bar{y} | x)$ as $f_{\Bar{\phi}, \Bar{\theta}} (x)$ for estimating the transition matrix $\mT$.
However, GLC is more inefficient than our FasTEN because it requires multiple training phases (See $\S$~\ref{subsec:4_2_training_time_comparison}).
We introduce a multi-head architecture with to speed up the training.
Furthermore, there is an additional performance advantage compared to GLC.
The multi-head architecture is presumed to help obtain a better feature extractor by inducing corruption-independent feature extraction.
Detailed experimental results can be found in Appendix~\ref{appendix:exp_glc}.

\paragraph{Differences from Meta Loss Correction (MLoC)~\cite{wang2020training}}

MLoC gradually finds the oracle transition matrix $\mT$ via the MAML framework~\cite{finn2017model}.
As mentioned earlier, MLoC is very slow because it requires three back-propagations for a single iteration due to its nature of MAML (See $\S$~\ref{subsec:4_2_training_time_comparison}).
MLoC directly parameterizes the transition matrix $\mT$ and learns it using various engineering techniques: strong prior and gradient clipping, which were not mentioned in original paper.
In contrast, our method estimates $\mT$ more accurately by sampling the posterior through a single forward propagation.
We empirically validate that our method performs better or comparable to MLoC even without label correction (See Appendix~\ref{appendix:exp_mloc}).

\subsection{Methods using Multi-head Architecture for Noisy Labels}

We propose a multi-head architecture to estimate the transition matrix efficiently: one is for the clean label distribution, and the other is for the noisy label distribution.
A similar multi-head architecture has been used in situations dealing with crowdsourcing.
Many crowdsourcing studies assume that multiple people label a single image~\cite{rodrigues2018deep,guan2018said,tanno2019learning}, where training a reliable classifier is the goal of the crowdsourcing problem.
They maintain separate heads for each annotator, and each head performs multi-task learning to learn each annotator's decisions directly.
Then, the final decision is made by voting each head's decision.
There is no component for estimating the label transition matrix in these methods and no primary head classifier to learn from the estimated label transition matrix.

\end{document}